
\documentclass{article}

\usepackage{microtype}
\usepackage{graphicx}
\usepackage{subfigure}
\usepackage{booktabs} 

\usepackage{hyperref}



\usepackage[accepted]{icml2019}


\frenchspacing
\usepackage{amsmath}
\usepackage{amsfonts}       
\usepackage{dsfont}
\graphicspath{{Images/}}
\usepackage{color}  
\usepackage{mathtools}
\usepackage{mathrsfs}
\mathtoolsset{showonlyrefs}

\usepackage{amsthm}

\newtheorem{thm}{Theorem}[]

\newtheorem{lemma}{Lemma}[]

\newtheorem{assumpA}{Assumption}
\newtheorem{assumpB}{Assumption}

\usepackage[vlined,linesnumbered,ruled,algo2e]{algorithm2e}
\SetKwProg{Fn}{Function}{}{}


\icmltitlerunning{Learning Action Representations for Reinforcement Learning}

\begin{document}

\twocolumn[
\icmltitle{Learning Action Representations for Reinforcement Learning}



\icmlsetsymbol{equal}{*}

\begin{icmlauthorlist}
\icmlauthor{Yash Chandak}{umass}
\icmlauthor{Georgios Theocharous}{adobe}
\icmlauthor{James E. Kostas}{umass}
\icmlauthor{Scott M. Jordan}{umass}
\icmlauthor{Philip S. Thomas}{umass}
\end{icmlauthorlist}

\icmlaffiliation{umass}{University of Massachusetts, Amherst, USA.}
\icmlaffiliation{adobe}{Adobe Research, San Jose, USA.}

\icmlcorrespondingauthor{Yash Chandak}{ychandak@cs.umass.edu}

\icmlkeywords{Machine Learning, ICML}

\vskip 0.3in
]



\printAffiliationsAndNotice{}  

\begin{abstract}
	    Most model-free reinforcement learning methods leverage state representations (embeddings) for generalization, but either ignore structure in the space of actions or assume the structure is provided \emph{a priori}.
	    We show how a policy can be decomposed into a component that acts in a low-dimensional space of action representations and a component that transforms these representations into actual actions.
	    These representations improve generalization over large, finite action sets by allowing the agent to infer the outcomes of actions similar to actions already taken. 
	    We provide an algorithm to both learn and use action representations and provide conditions for its convergence.
	    The efficacy of the proposed method is demonstrated on large-scale real-world problems.
\end{abstract}

	\section{Introduction}

\emph{Reinforcement learning} (RL) methods have been applied successfully to many simple and game-based tasks. 
However, their applicability is still limited for problems involving decision making in many real-world settings. One reason is that many real-world problems with significant human impact involve selecting a single decision from a multitude of possible choices. 
For example, maximizing long-term portfolio value in finance using various trading strategies \cite{jiang2017deep}, improving fault tolerance by regulating voltage level of all the units in a large power system \cite{glavic2017reinforcement}, and personalized tutoring systems for recommending sequences of videos from a large collection of tutorials \cite{sidney2005integrating}. 
Therefore, it is important that we develop RL algorithms that are effective for real problems, where the number of possible choices is large.

In this paper we consider the problem of creating RL algorithms that are effective for problems with large action sets. 
Existing RL algorithms handle large \emph{state} sets (e.g., images consisting of pixels) by learning a representation or embedding for states (e.g., using line detectors or convolutional layers in neural networks), which allow the agent to reason and learn using the state representation rather than the raw state. 
We extend this idea to the set of actions: we propose learning a representation for the actions, which allows the agent to reason and learn by making decisions in the space of action representations rather than the original large set of possible actions. 
This setup is depicted in Figure \ref{Fig:execution}, where an \emph{internal policy}, $\pi_i$, acts in a space of action representations, and a function, $f$, transforms these representations into actual actions. 
Together we refer to $\pi_i$ and $f$ as the \emph{overall policy}, $\pi_o$.

\begin{figure}[t]
    \centering
	\includegraphics[scale=0.2]{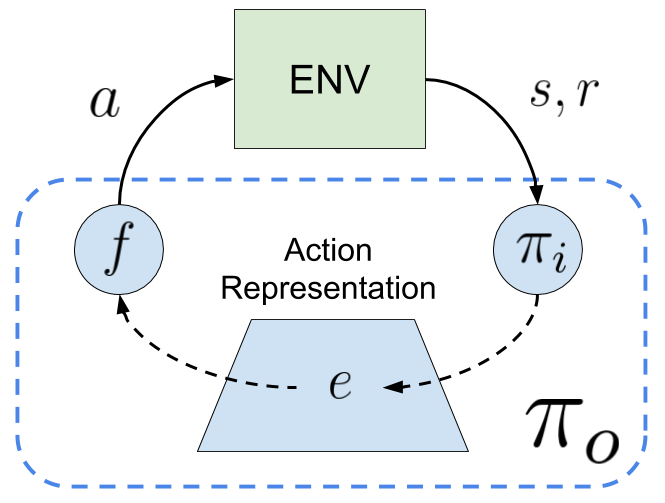}
	\caption{ 
	The structure of the proposed overall policy, $\pi_o$, consisting of $f$ and $\pi_i$, that learns action representations to generalize over large action sets. 
	}
	\label{Fig:execution}
\end{figure}

Recent work has shown the benefits associated with using action-embeddings \citep{dulac2015deep}, particularly that they allow for generalization over actions.
For real-world problems where there are thousands of possible (discrete) actions, this generalization can significantly speed learning. 
However, this prior work assumes that fixed and predefined representations are provided. 
In this paper we present a method to autonomously learn the underlying structure of the action set by using the observed transitions.
%
%
This method can both learn action representation from scratch and improve upon a provided action representation.

A key component of our proposed method is that it frames the problem of learning an action representation (learning $f$) as a \emph{supervised} learning problem rather than an RL problem. 
This is desirable because supervised learning methods tend to learn more quickly and reliably than RL algorithms since they have access to instructive feedback rather than evaluative feedback \citep{sutton2018reinforcement}. 
The proposed learning procedure exploits the structure in the action set by aligning actions based on the similarity of their impact on the state. 
Therefore, updates to a policy that acts in the space of learned action representation generalizes the feedback received after taking an action to other actions that have similar representations. 
Furthermore, we prove that our combination of supervised learning (for $f$) and reinforcement learning (for $\pi_i$) within one larger RL agent preserves the almost sure convergence guarantees provided by policy gradient algorithms \citep{borkar1997actor}. 

To evaluate our proposed method empirically, we study two real-world recommendation problems using data from widely used commercial applications. 
In both applications, there are thousands of possible recommendations that could be given at each time step (e.g., which video to suggest the user watch next, or which tool to suggest to the user next in multi-media editing software). 
Our experimental results show our proposed system's ability to significantly improve performance relative to existing methods for these applications by quickly and reliably learning action representations that allow for meaningful generalization over the large discrete set of possible actions. 

The rest of this paper is organized to provide in the following order: a background on RL, related work, and the following primary contributions:
    	\begin{itemize}
    	    \item A new parameterization, called the \textit{overall policy}, that leverages action representations. We show that for all optimal policies, $\pi^*$, there exist parameters for this new policy class that are equivalent to $\pi^*$. 
            \item A proof of equivalence of the policy gradient update between the overall policy and the \textit{internal policy}. 
    	    \item A supervised learning algorithm for learning action representations ($f$ in Figure \ref{Fig:execution}). This procedure can be combined with any existing policy gradient method for learning the overall policy.
    	    \item An almost sure asymptotic convergence proof for the algorithm, which extends existing results for actor-critics \cite{borkar1997actor}.
    	    \item Experimental results on real-world domains with thousands of actions using actual data collected from recommender systems. 
    	\end{itemize}


	\section{Background}
    	We consider problems modeled as discrete-time \textit{Markov decision processes} (MDPs) with discrete states and finite actions. 
    	An MDP is represented by a tuple, $\mathcal{M} = (\mathcal{S},\mathcal{A},\mathcal{P},\mathcal{R}, \gamma, d_0)$.
    	$\mathcal{S}$ is the set of all possible states, called the state space, and $\mathcal{A}$ is a finite  set of actions, called the action set. 
    	Though our notation assumes that the state set is finite, our primary results extend to MDPs with continuous states.
        In this work, we restrict our focus to MDPs with finite action sets, and $|\mathcal{A}|$ denotes the size of the action set.
    	The random variables, $S_t \in \mathcal S$, $A_t \in \mathcal A$, and $R_t \in \mathbb R$ denote the state, action, and reward at time $t \in \{0,1,\dotsc\}$.
    	We assume that $R_t \in [-R_\text{max},R_\text{max}]$ for some finite $R_\text{max}$. 
    	The first state, $S_0$, comes from an initial distribution, $d_0$, and the reward function $\mathcal R$ is defined so that $\mathcal R(s,a)=\mathbf{E}[R_t|S_t=s,A_t=a]$ for all $s \in \mathcal S$ and $a \in \mathcal A$. 
    	Hereafter, for brevity, we write $P$ to denote both probabilities and probability densities,
    	and when writing probabilities and expectations, write $s,a,s'$ or $e$ to denote both elements of various sets \emph{and} the events $S_t=s, A_t=a$, $S_{t+1}=s'$, or $E_t=e$ (defined later). 
    	The desired meaning for $s,a,s'$ or $e$ should be clear from context. 
    	The reward discounting parameter is given by $\gamma \in [0,1)$. 
    	$\mathcal{P}$ is the state transition function, such that $\forall s,a,s',t,$ $ \mathcal{P}(s,a,s') \coloneqq P(s'| s,a)$.

    	A policy $\pi:\mathcal A \times \mathcal S \to [0,1]$ is a conditional distribution over actions for each state: $\pi(a, s)\coloneqq P(A_t=a|S_t=s)$ for all $s \in \mathcal S, a \in \mathcal A$, and $t$.
    	Although $\pi$ is simply a function, we write $\pi(a|s)$ rather than $\pi(a,s)$ to emphasize that it is a conditional distribution. 
    	For a given $\mathcal{M}$, an agent's goal is to find a policy that maximizes the expected sum of discounted future rewards.
    	For any policy $\pi$, the corresponding state-action value function is $q^\pi(s,a) = \mathbf{E}[\sum_{k=0}^{\infty}\gamma^k R_{t+k} |S_t=s, A_t=a, \pi]$, where conditioning on $\pi$ denotes that $A_{t+k} \sim \pi(\cdot|S_{t+k})$ for all $A_{t+k}$ and $S_{t+k}$ for $k \in [t+1, \infty)$.
    	The state value function is $v^\pi(s) = \mathbf{E}[\sum_{k=0}^{\infty}\gamma^k R_{t+k} |S_t=s, \pi]$.  
    	It follows from the Bellman equation that $ v^\pi(s)= \sum_{a \in \mathcal{A}} \pi(a|s) q^\pi(s,a)$.
    	An optimal policy is any $\pi^{*} \in \text{argmax}_{\pi \in \Pi}\mathbf{E} [\sum_{t=0}^{\infty}\gamma^tR_t |\pi]$, where $\Pi$ denotes the set of all possible policies, and $v^*$ is shorthand for $v^{\pi^*}$.

\section{Related Work}

	    %
		%
		Here we summarize the most related work and discuss how they relate to the proposed work. 
		%
		%
		
		\textbf{Factorizing Action Space: }         To reduce the  size of large action spaces, \citet{pazis2011generalized} considered representing each action in binary format and learning a value function associated with each bit.
		A similar binary based approach was also used as an ensemble method to learning optimal policies for MDPs with large action sets \cite{sallans2004reinforcement}. 
		For planning problems, \citet{cui2016online, CuiK18} showed how a gradient based search on a symbolic representation of the state-action value function can be used to address scalability issues.
        More recently, it was shown that better performances can be achieved on Atari 2600 games \cite{bellemare2013arcade} when actions are factored into their primary categories \cite{sharma2017learning}.
        All these methods assumed that a handcrafted binary decomposition of raw actions was provided. 
        To deal with discrete actions that might have an underlying continuous representation, \citet{van2009using} used policy gradients with continuous actions and selected the nearest discrete action. 
        This work was extended by \citet{dulac2015deep} for larger domains, where they performed action representation look up,  similar to our approach.
        However, they assumed that the embeddings for the actions are given, \textit{a priori}. 
        Recent work also showed how action representations can be learned using data from expert demonstrations \cite{naturalGuy}.
        We present a method that can learn action representations with no prior knowledge or further optimize available action representations. 
        If no prior knowledge is available, our method learns these representations from scratch autonomously.
        %
		%
        %
        %
        %
        
        \textbf{Auxiliary Tasks: }
        Previous works showed empirically that supervised learning with the objective to predict a component of a transition tuple $(s,a,r,s')$ from the others, can be useful as an auxiliary method to learn state representations \cite{jaderberg2016reinforcement,combinedVIncent} or to obtain intrinsic rewards  \cite{shelhamer2016loss,pathak2017curiosity}.
        We show how the overall policy itself can be decomposed using an action representation module learned using a similar loss function. 
        %
        
        
        \textbf{Motor Primitives: } 
	    Research in neuroscience suggests that animals decompose their plans into mid-level abstractions, rather than the exact low-level motor controls needed for each movement \cite{jing2004construction}.
        Such abstractions of behavior that form the building blocks for motor control are often called \textit{motor primitives} \cite{lemay2004modularity,mussa2000motor}.
    	In the field of robotics, dynamical system based models have been used to construct \textit{dynamic movement primitives} (DMPs) for continuous control  \cite{ijspeert2003learning,schaal2006dynamic}.
    	Imitation learning can also be used to learn DMPs, which can be fine-tuned  online using RL \cite{kober2009policy,kober2009learning}.
    	However, these are significantly different from our work as they are specifically parameterized for robotics tasks and produce an encoding for kinematic trajectory plans, not the actions.

    	Later, \citet{thomas2012motor} showed how a goal-conditioned policy can be learned using multiple motor primitives that control only useful sub-spaces of the underlying control problem. 
    	To learn binary motor primitives, \citet{thomas2011conjugate} showed how a policy can be modeled as a composition of multiple ``coagents", each of which learns using only the local policy gradient information \cite{thomas2011policy}.
    	Our work follows a similar direction, but we focus on automatically learning optimal continuous-valued action representations for discrete actions.
    	For action representations, we present a method that uses supervised learning and restricts the usage of high variance policy gradients to train the internal policy only. 
		\textbf{Other Domains: } In supervised learning, representations of the output categories have been used to extract additional correlation information among the labels.
		%
		%
		Popular examples include learning label embeddings for image classification \cite{akata2016label} and learning word embeddings for natural language problems \cite{mikolov2013distributed}.
		In contrast, for an RL setup, the policy is a function whose outputs correspond to the available actions. 
		We show how learning action representations can be beneficial as well.

    \section{Generalization over Actions}

The benefits of capturing the structure in the underlying state space of MDPs is a well understood and a widely used concept in RL.
State representations allow the policy to generalize across states.
Similarly, there often exists additional structure in the space of actions that can be leveraged.
We hypothesize that exploiting this structure can enable quick generalization across actions, thereby making learning with large action sets feasible. 
To bridge the gap, we introduce an action representation space, $\mathcal{E} \subseteq \mathbb{R}^{ d}$, and consider a factorized policy, $\pi_o$, parameterized by an embedding-to-action mapping function, $f \colon \mathcal{E} \to \mathcal{A}$, and an internal policy, $\pi_i \colon \mathcal{S} \times \mathcal{E} \to [0,1]$, such that the distribution of $A_t$ given $S_t$ is characterized by:
\begin{equation}
        E_t \sim \pi_i(\cdot |S_t), \hspace{2cm}  A_t = f(E_t). \label{eqn:decomposed-policy}
\end{equation}
Here, $\pi_i$ is used to sample $E_t \in \mathcal{E}$, and the function $f$ deterministically maps this representation to an action in the set $\mathcal{A}$.
Both these components together form an \textit{overall policy}, $\pi_o$.
Figure \ref{Fig:representation} illustrates the probability of each action under such a parameterization.
With a slight abuse of notation, we use $f^{-1}(a)$ as a one-to-many function that denotes the set of representations  that are mapped to the action $a$ by the function $f$, i.e., $f^{-1}(a) \coloneqq \{e\in\mathcal E:f(e)=a\}$. 
\begin{figure}[h]
\centering
\includegraphics[scale=0.22]{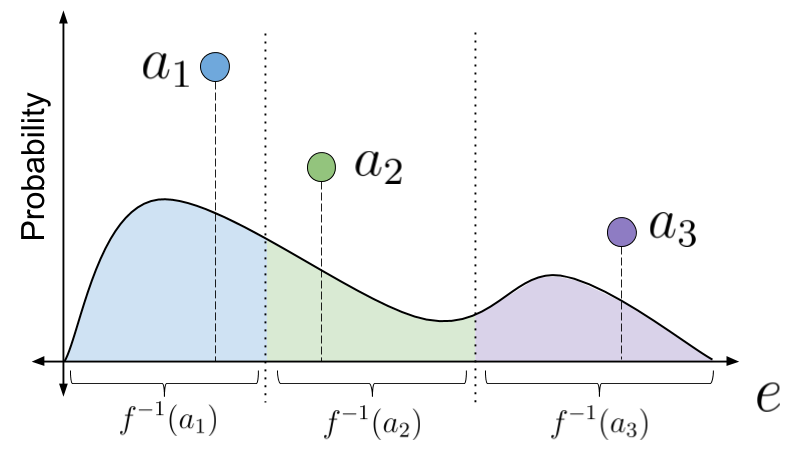}
\caption{
Illustration of the  probability induced for three actions by the probability density of $\pi_i(e,s)$ on a $1$-D embedding space. The $x$-axis represents the embedding, $e$, and the $y$-axis represents the probability. The colored regions represent the mapping $a=f(e)$, where each color is associated with a specific action. }
\label{Fig:representation}
\end{figure}

%
In the following sections we discuss the existence of an optimal policy $\pi_o^*$ and the learning procedure for $\pi_o$.
To elucidate the steps involved, we split it into four parts. 
First, we show that there exists $f$ and $\pi_i$ such that $\pi_o$ is an optimal policy.
Then we present the supervised learning process for the function $f$ when $\pi_i$ is fixed.
Next we give the policy gradient learning process for $\pi_i$ when $f$ is fixed.
Finally, we combine these methods to learn $f$ and $\pi_i$ simultaneously.

\subsection{Existence of $\pi_i$ and $f$ to Represent An Optimal Policy} 
   
%
In this section, we aim to establish a condition under which $\pi_o$ can represent an optimal policy. 
Consequently, we then define the optimal set of $\pi_o$ and $\pi_i$ using the proposed parameterization.    
To establish the main results we begin with the necessary assumptions.

%
The characteristics of the actions can be naturally associated with how they influence state transitions.
In order to learn a representation for actions that captures this structure, we consider a standard Markov property, often used for learning probabilistic graphical models \cite{ghahramani2001introduction}, and make the following assumption that the transition information can be sufficiently encoded to infer the action that was executed. 
\begin{assumpA}
\label{ass:A1}
Given an embedding $E_t$, $A_t$ is conditionally independent of $S_t$ and $S_{t+1}$:
{\small
$$
    P(A_t|S_t,S_{t+1}) =\!\!
    \int_{\mathcal{E}} \!\!\!P(A_t|E_t=e) P(E_t=e|S_t,S_{t+1})\,\mathrm{d}e.
$$
}
%
\end{assumpA}
%
%
%
\begin{assumpA}
\label{ass:A2}
Given the embedding $E_t$ the action, $A_t$ is deterministic and is represented by a function $f:\mathcal E \to \mathcal A$, i.e., $\exists a \text{ such that } P(A_t=a|E_t=e)=1$.
\end{assumpA}   
%

%
        %
       %
       
        We now establish a necessary condition under which our proposed policy can represent an optimal policy. 
        This condition will also be useful later when deriving learning rules.
    	\begin{lemma}  
    	    \label{lemma:bellman}
    	    Under Assumptions \eqref{ass:A1}--\eqref{ass:A2}, which defines a function $f$, for all $\pi$, there exists a $\pi_i$ such that
            \begin{align}
                v^\pi(s) = \sum_{a \in \mathcal{A}} \int_{f^{-1}(a)} \pi_i(e|s) q^\pi(s, a)\, \mathrm{d}e. \label{eqn:lemma-1}
            \end{align}
        \end{lemma}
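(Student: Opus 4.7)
The plan is to reduce the claim to a purely distributional statement about $\pi_i$ pushed forward through $f$, and then construct such a $\pi_i$ explicitly. The starting point is the standard Bellman identity from the Background section, namely $v^\pi(s) = \sum_{a \in \mathcal{A}} \pi(a|s) q^\pi(s,a)$. Comparing this to the right-hand side of \eqref{eqn:lemma-1}, the lemma will follow if, for every policy $\pi$, I can exhibit an internal policy $\pi_i(\cdot|s)$ satisfying
\begin{equation}
  \int_{f^{-1}(a)} \pi_i(e|s)\,\mathrm{d}e \;=\; \pi(a|s) \qquad \text{for every } a \in \mathcal{A}.
\end{equation}
Plugging this into $\sum_a q^\pi(s,a) \int_{f^{-1}(a)} \pi_i(e|s)\,\mathrm{d}e$ and invoking Bellman then gives \eqref{eqn:lemma-1}.

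\textbf{Structure on the embedding space.} The next step is to note that Assumption \eqref{ass:A2} makes $f$ a deterministic function from $\mathcal{E}$ to $\mathcal{A}$, so the collection $\{f^{-1}(a)\}_{a \in \mathcal{A}}$ is a disjoint partition of $\mathcal{E}$. Assumption \eqref{ass:A1}, applied to any $(s,s')$ pair that occurs with positive probability under some policy, forces $\int_{\mathcal{E}} P(A_t=a|E_t=e)P(E_t=e|s,s')\,\mathrm{d}e$ to match $P(A_t=a|s,s')$, which together with \eqref{ass:A2} guarantees that each $a \in \mathcal{A}$ that can actually be executed has $f^{-1}(a)$ non-empty. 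This non-emptiness is the one subtle point; it is why both assumptions are invoked.

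\textbf{Construction of $\pi_i$.} For each $a \in \mathcal{A}$, fix an arbitrary probability density $\mu_a$ supported on $f^{-1}(a)$ (e.g., uniform on a bounded subset, or a Dirac if $f^{-1}(a)$ is a single point). Define
\begin{equation}
  \pi_i(e|s) \;\coloneqq\; \sum_{a \in \mathcal{A}} \pi(a|s)\, \mu_a(e)\, \mathbb{1}\!\left[e \in f^{-1}(a)\right].
\end{equation}
Because the sets $f^{-1}(a)$ are disjoint, exactly one indicator is nonzero for any $e$, so $\pi_i(\cdot|s)$ is a valid density and integrates to $\sum_a \pi(a|s) = 1$. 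Integrating over $f^{-1}(a)$ isolates a single term of the sum and yields $\pi(a|s)$, as required.

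\textbf{Main obstacle.} The bookkeeping above is essentially mechanical. The one place where care is needed is in justifying non-emptiness of each $f^{-1}(a)$: without it, $\pi_i$ could not reproduce policies that place positive mass on an uncovered action, so the existence claim would fail. I would handle this by restricting attention to actions in the image of $f$ and appealing to Assumption \eqref{ass:A1} to argue that any action executable by $\pi$ must be attainable through some embedding, which closes the loop.
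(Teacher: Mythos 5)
Your proof is correct, but it takes a genuinely different route from the paper's. The paper proves the lemma by probabilistic manipulation of the Bellman equation: it rewrites $\pi(a|s)P(s'|s,a)$ as $P(a|s,s')P(s,s')/P(s)$, introduces $e$ via the law of total probability, multiplies and divides by $P(e|s)$, and then uses Assumption \eqref{ass:A1} (to replace $P(a|s,e)$ by $P(a|e)$) and Assumption \eqref{ass:A2} (to collapse $P(a|e)$ to an indicator on $f^{-1}(a)$), finally \emph{identifying} $\pi_i(e|s)$ with the conditional density $P(e|s)$ of a presupposed latent variable $E_t$ in the generative process. You instead observe that $q^\pi(s,a)$ is constant in $e$ over each fiber $f^{-1}(a)$, so the lemma reduces to exhibiting a $\pi_i$ whose push-forward under $f$ equals $\pi(\cdot|s)$, and you construct one explicitly as a mixture $\sum_a \pi(a|s)\mu_a(e)$ of densities supported on the fibers. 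Your argument is shorter, constructive, and makes the existence claim transparent for an \emph{arbitrary} $\pi$, whereas the paper's derivation really shows that \emph{if} the agent's action selection already factors through an embedding satisfying \eqref{ass:A1}--\eqref{ass:A2}, then $P(e|s)$ is the desired internal policy---a reading that the paper leans on later when it learns $f$ from $P(A_t|S_t,S_{t+1})$. Two caveats apply equally to both proofs but are worth flagging in yours: (i) for $\int_{f^{-1}(a)}\pi_i(e|s)\,\mathrm{d}e$ to equal $\pi(a|s)>0$ with $\pi_i$ a genuine density, the fiber $f^{-1}(a)$ must have positive Lebesgue measure, so your Dirac fallback for singleton fibers technically breaks the displayed integral (the paper hides the same issue behind a footnote disclaiming measure-theoretic rigor); and (ii) your non-emptiness argument via Assumption \eqref{ass:A1} only covers actions that occur with positive probability under \emph{some} executable transition, which is exactly what is needed, but you should state that actions with $\pi(a|s)=0$ everywhere need no fiber at all, so the construction is unaffected by them.
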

         The proof is deferred to the Appendix \ref{Appendix:lemma-1}. 
         Following Lemma \eqref{lemma:bellman}, we use $\pi_i$ and $f$ to define the overall policy as
        \begin{align}
            \pi_o(a|s) &\coloneqq \int_{f^{-1}(a)}\pi_i(e|s)\,\mathrm{d}e.
            \label{eqn:optimal-policy}
        \end{align}

        \begin{thm} Under Assumptions \eqref{ass:A1}--\eqref{ass:A2}, which defines a function $f$,
        there exists an overall policy, $\pi_o$, such that $v^{\pi_o}=v^{\star}$.   \label{thm:optimal-overall-policy}
        \end{thm}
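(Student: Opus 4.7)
The plan is to derive the theorem from Lemma \ref{lemma:bellman} together with the standard uniqueness of solutions to the Bellman evaluation equation. Since $\mathcal{S}$ is (effectively) finite and rewards are bounded, at least one optimal (deterministic) policy $\pi^{\star}$ exists, so it suffices to exhibit a $\pi_o$ of the form \eqref{eqn:optimal-policy} whose value function coincides with $v^{\pi^\star} = v^{\star}$.

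First I would apply Lemma \ref{lemma:bellman} to $\pi = \pi^\star$. Under Assumptions \eqref{ass:A1}--\eqref{ass:A2} this supplies an internal policy $\pi_i$ with
\begin{equation*}
v^{\pi^\star}(s) \;=\; \sum_{a\in\mathcal{A}} \int_{f^{-1}(a)} \pi_i(e|s)\, q^{\pi^\star}(s,a)\, \mathrm{d}e
\quad\text{for all } s\in\mathcal{S}.
\end{equation*}
Defining $\pi_o$ by \eqref{eqn:optimal-policy} and swapping the order of integration/summation (the integrand is constant in $e$ once $a=f(e)$ is fixed), the right-hand side collapses to $\sum_{a} \pi_o(a|s)\, q^{\pi^\star}(s,a)$. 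Hence for every $s$,
\begin{equation*}
v^{\pi^\star}(s) \;=\; \sum_{a\in\mathcal{A}} \pi_o(a|s)\, q^{\pi^\star}(s,a).
\end{equation*}

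Next I would rewrite this identity using the Bellman consistency of $q^{\pi^\star}$, namely $q^{\pi^\star}(s,a) = \mathcal{R}(s,a) + \gamma \sum_{s'} \mathcal{P}(s,a,s') v^{\pi^\star}(s')$, to obtain
\begin{equation*}
v^{\pi^\star}(s) \;=\; \sum_{a} \pi_o(a|s)\!\left[\mathcal{R}(s,a) + \gamma\!\sum_{s'} \mathcal{P}(s,a,s')\, v^{\pi^\star}(s')\right].
\end{equation*}
This says precisely that $v^{\pi^\star}$ is a fixed point of the Bellman evaluation operator $T^{\pi_o}$ associated with $\pi_o$. Since $\gamma\in[0,1)$ and rewards are bounded in $[-R_{\max},R_{\max}]$, $T^{\pi_o}$ is a $\gamma$-contraction on the Banach space of bounded value functions and therefore has a unique fixed point, namely $v^{\pi_o}$. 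Consequently $v^{\pi_o} = v^{\pi^\star} = v^{\star}$, which is the claim of the theorem.

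The only non-routine step is the application of Lemma \ref{lemma:bellman}; once that is in hand, the rest is a direct appeal to the uniqueness of the Bellman fixed point, so I do not expect any genuine obstacle here. Minor care is needed only to ensure $\pi_o$ defined by \eqref{eqn:optimal-policy} is indeed a valid conditional distribution over $\mathcal{A}$, which follows because $\{f^{-1}(a)\}_{a\in\mathcal{A}}$ partitions $\mathcal{E}$ by Assumption \eqref{ass:A2}, so $\sum_a \pi_o(a|s) = \int_{\mathcal{E}} \pi_i(e|s)\, \mathrm{d}e = 1$.
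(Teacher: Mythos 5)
Your proposal is correct and takes essentially the same route as the paper: apply Lemma \ref{lemma:bellman} to an optimal policy $\pi^\star$, define $\pi_o$ via \eqref{eqn:optimal-policy}, and conclude $v^{\pi_o}=v^\star$. The only difference is that you make explicit the final step --- that $v^\star$ being a fixed point of the Bellman evaluation operator $T^{\pi_o}$ forces $v^{\pi_o}=v^\star$ by the $\gamma$-contraction uniqueness argument, and that $\pi_o$ is a valid distribution --- which the paper simply asserts; this is added rigor, not a different approach.
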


        \begin{proof}
        This follows directly from Lemma \ref{lemma:bellman}.
        Because the state and action sets are finite, the rewards are bounded, and $\gamma \in [0,1)$, there exists at least one optimal policy. 
        For any optimal policy $\pi^\star$, the corresponding state-value and  state-action-value functions are the unique $v^\star$ and $q^\star$, respectively.
       By Lemma \ref{lemma:bellman} there exist $f$ and $\pi_i$ such that 
       \begin{align}
           v^\star(s)&=  \sum_{a \in \mathcal{A}}  \int_{f^{-1}(a)}\pi_i(e|s) q^\star(s,a)\,\mathrm{d}e. \label{eqn:optimal-overall-policy}
       \end{align}
   Therefore, there exists $\pi_i$ and $f$, such that the resulting $\pi_o$ has the state-value function $v^{\pi_o}=v^{\star}$, and hence it represents an optimal policy.
        \end{proof}
        Note that Theorem \ref{thm:optimal-overall-policy} establishes existence of an optimal overall policy based on equivalence of the state-value function, but does \emph{not} ensure that all optimal policies can be represented by an overall policy. 
        Using \eqref{eqn:optimal-overall-policy}, we define $\Pi_o^\star \coloneqq \{\pi_o : v^{\pi_o}=v^\star\}$.
        Correspondingly, we define the set of \textit{optimal internal policies} as $\Pi_i^\star \coloneqq \{\pi_i : \exists \pi_o^\star \in \Pi_o^\star,\exists f, \pi_o^\star(a|s) = \int_{f^{-1}(a)}\pi_i(e|s)\,\mathrm{d}e \}$. 
        %
       %
       %
        %
        \subsection{Supervised Learning of $f$ For a Fixed $\pi_i$}    
        \label{section:learn-f}    
        %
        
        Theorem \ref{thm:optimal-overall-policy} shows that there exist $\pi_i$ and a function $f$, which helps in predicting the action responsible for the transition from $S_t$ to $S_{t+1}$, such  that the corresponding overall policy is optimal.  
        %
        %
        However, such a function, $f$, may not be known \emph{a priori}.
        In this section, we present a method to estimate $f$ using data collected from interactions with the environment.

        By Assumptions \eqref{ass:A1}--\eqref{ass:A2}, $P(A_t|S_t,S_{t+1})$ can be written in terms of $f$ and $P(E_t|S_t,S_{t+1})$. 
        We propose searching for an estimator, $\hat f$, of $f$ and an estimator, $\hat g(E_t|S_t,S_{t+1})$, of $P(E_t|S_t,S_{t+1})$ such that a reconstruction of $P(A_t|S_t,S_{t+1})$ is accurate. 
        Let this estimate of $P(A_t|S_t,S_{t+1})$ based on $\hat f$ and $\hat g$ be
        {
        \begin{equation}
            \small
            \hat P(A_t|S_t,S_{t+1})  =  \int_\mathcal{E} \!\! \hat f (A_t|E_t\!=\!e) \hat g(E_t\!=\!e|S_t,S_{t+1})\,\mbox{d}e \label{eqn:action-rep-estimator}
        \end{equation}\textnormal{} 
        }
        %
        %
        One way to measure the difference between $P(A_t|S_t,S_{t+1})$ and $\hat P(A_t|S_t,S_{t+1})$ is using the expected (over states coming from the on-policy distribution) Kullback-Leibler (KL) divergence
        \begin{align}
             =& -\mathbf{E} \left [\sum_{a \in \mathcal{A}} P(a|S_t,S_{t+1}) \ln \left ( \frac{\hat P(a|S_t,S_{t+1})}{P(a|S_t,S_{t+1}) } \right ) \right ]
            \\
            =& -\mathbf{E} \left [ \ln \left ( \frac{\hat P(A_t|S_t,S_{t+1})}{P(A_t|S_t,S_{t+1})} \right )  \right ]. \label{eqn:KL-sample}
        \end{align}
        %
        
        Since the observed transition tuples, $(S_t,A_t,S_{t+1})$, contain the action responsible for the given $S_t$ to $S_{t+1}$ transition,
        %
        an on-policy sample estimate of the KL-divergence 
        can be computed readily using \eqref{eqn:KL-sample}.
        We adopt the following loss function based on the KL divergence between $P(A_t|S_t,S_{t+1})$ and $\hat P(A_t|S_t,S_{t+1})$:
        \begin{align}
            \mathcal{L}(\hat f, \hat g) &= - \mathbf{E}\left [ \ln \left (\hat P(A_t|S_t,S_{t+1}) \right )\right ], 
            \label{Eqn:self-supervised-loss}
        \end{align}
        where the denominator in \eqref{eqn:KL-sample} is not included in \eqref{Eqn:self-supervised-loss} because it does not depend on $\hat f$ or $\hat g$. 
        If $\hat f$ and $\hat g$ are parameterized, their parameters can be learned by minimizing the loss function, $\mathcal{L}$, using a supervised learning procedure.
           	\begin{figure}[t]
    		\centering
    		\includegraphics[scale=0.2]{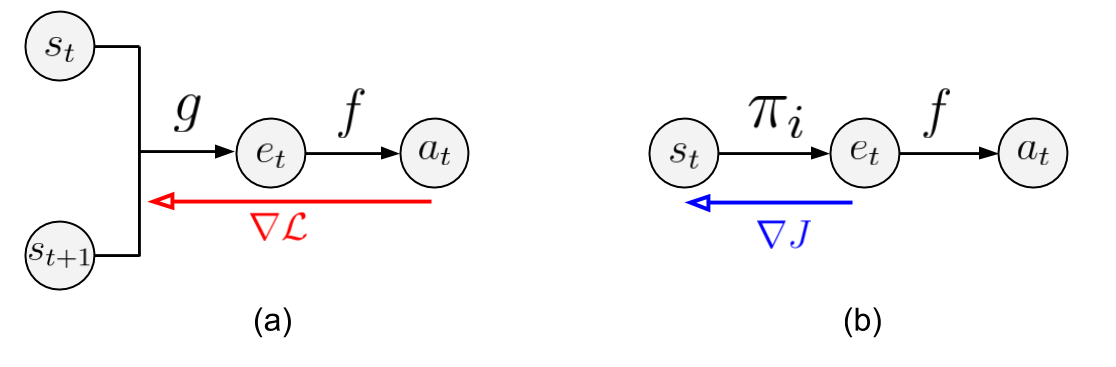}
    		\caption{%
    		(a) Given a state transition tuple, functions $g$ and $f$ are used to estimate the action taken. 
    		The red arrow denotes the gradients of the supervised loss \eqref{Eqn:self-supervised-loss} for learning the parameters of these functions.
    		%
    		(b) During execution, an internal policy, $\pi_i$, can be used to first select an action representation, $e$. 
    		The function $f$, obtained from previous learning procedure, then transforms this representation to an action.
    		The blue arrow represents the internal policy gradients \eqref{Eqn:internal_gradient} obtained using Lemma \ref{prop:local-policy-gradient} to update $\pi_i$.
    		}
    		\label{Fig:architecture-graph}
    	\end{figure}

        A computational graph for this model is shown in Figure \ref{Fig:architecture-graph}. 
        We refer the reader to the Appendix \ref{Appendix:parameter} for the parameterizations of $\hat f$ and $\hat g$ used in our experiments.
        Note that, while $\hat f$ will be used for $f$ in an overall policy, $\hat g$ is only used to find $\hat f$, and will not serve an additional purpose.

        As this supervised learning process only requires estimating $P(A_t|S_t,S_{t+1})$, 
        it does not require (or depend on) the rewards. 
        This partially mitigates the problems due to sparse and stochastic rewards, since an alternative informative supervised signal is always available.
        This is advantageous for making the action representation component of the overall policy learn quickly and with low variance updates.

       \subsection{Learning $\pi_i$ For a Fixed $f$}
        \label{section:learn-internal-policy}
    	A common method for learning a policy parameterized with weights $\theta$ is to optimize the discounted start-state objective function, 
    	$ 
    	    J(\theta) := \sum_{s \in \mathcal{S}} d_0(s) v^\pi(s). 
    	$ 
    	For a policy with weights $\theta$, the expected performance of the policy can be improved by ascending the \emph{policy gradient}, $\frac{\partial J(\theta)}{\partial \theta}$. 

    	Let the state-value function associated with the internal policy, $\pi_i$, be $v^{\pi_i}(s) =  \mathbf{E}[\sum_{t=0}^{\infty}\gamma^t R_{t} |s, \pi_i, f]$, and the state-action value function $q^{\pi_i}(s,e) = \mathbf{E}[\sum_{t=0}^{\infty}\gamma^t R_{t} |s, e, \pi_i, f]$.
    	 We then define the performance function for $\pi_i$ as:
        \begin{align}
    	    J_i(\theta) := \sum_{s \in \mathcal{S}} d_0(s) v^{\pi_i}(s) \label{Eqn:internal-Performance-function}.
    	\end{align}
    	Viewing the embeddings as the action for the agent with policy $\pi_i$, the policy gradient theorem \cite{sutton2000policy}, states that the unbiased \cite{thomas2014bias} gradient of \eqref{Eqn:internal-Performance-function} is,
        \begin{align}
        	\frac{\partial J_i(\theta)}{\partial \theta} = \sum_{t=0}^{\infty}\mathbf{E}\left [  \gamma^t \int_\mathcal{E} q^{\pi_i}(S_t, e) \frac{\partial}{\partial \theta} \pi_i(e|S_t)  \, \mathrm{d}e\right ],
    	    \label{Eqn:internal_gradient}
    	\end{align}
    	where, the expectation is over states from $d^\pi$, as defined by \citet{sutton2000policy} (which is not a true distribution, since it is not normalized). 
    	The parameters of the internal policy can be learned by iteratively updating its parameters in the direction of $\partial J_i(\theta) /\partial \theta$.
    	Since there are no special constraints on the policy $\pi_i$, any policy gradient algorithm designed for continuous control, like DPG \cite{silver2014deterministic}, PPO \cite{schulman2017proximal}, NAC \cite{bhatnagar2009natural} etc., can be used out-of-the-box.

    	However, note that the performance function associated with the overall policy, $\pi_o$ (consisting of function $f$ and the internal policy parameterized with weights $\theta$), is:
    	\begin{align}
    	    J_o(\theta,f) = \sum_{s \in \mathcal{S}} d_0(s) v^{\pi_o}(s) \label{Eqn:Performance-function}.
    	\end{align}
    	The ultimate requirement is the improvement of this overall performance function, $J_o(\theta,f)$, and not just $J_i(\theta)$.
        So, how useful is it to update the internal policy, $\pi_i$, by following the gradient of its own performance function? The following lemma answers this question. 
    
        \begin{lemma}
         For all deterministic functions, $f$, which map each point, $e \in \mathbb{R}^{ d}$, in the representation space to an action, $a \in \mathcal{A}$, the expected updates to $\theta$ based on $\frac{\partial J_i(\theta)}{\partial \theta}$ are equivalent to updates based on $\frac{\partial J_o(\theta,f)}{\partial \theta}$. 
    	That is,
    	\begin{align*}
        	 \frac{\partial J_o(\theta,f)}{\partial \theta} = \frac{\partial J_i(\theta)}{\partial \theta}.
    	\end{align*}
    	\label{prop:local-policy-gradient}
    	\end{lemma}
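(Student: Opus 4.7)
The plan is to establish the stronger statement that $J_o(\theta, f) = J_i(\theta)$ as scalar-valued functions of $\theta$, from which equality of the partial derivatives follows immediately. All of the content lies in checking that the two sampling procedures---drawing $A_t \sim \pi_o(\cdot|S_t)$ directly, versus drawing $E_t \sim \pi_i(\cdot|S_t)$ and then setting $A_t = f(E_t)$---induce the same joint distribution over trajectories when started from $d_0$.

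First I would invoke the definition of $\pi_o$ in \eqref{eqn:optimal-policy} together with the determinism of $f$ to verify that for every $s \in \mathcal{S}$ and $a \in \mathcal{A}$,
\begin{align*}
P(A_t = a \mid S_t = s, \pi_i, f) = \int_{f^{-1}(a)} \pi_i(e \mid s)\, \mathrm{d}e = \pi_o(a \mid s).
\end{align*}
Next, because the transition kernel $\mathcal{P}$ and reward function $\mathcal{R}$ depend on $(S_t, A_t)$ and not on $E_t$, a short induction on the Bellman recursion then gives $q^{\pi_i}(s, e) = q^{\pi_o}(s, f(e))$ and $v^{\pi_i}(s) = v^{\pi_o}(s)$ for every $s$ and $e$. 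Integrating against $d_0$ yields $J_i(\theta) = J_o(\theta, f)$, and differentiating both sides with respect to $\theta$ completes the proof.

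There is no real obstacle here; the only mild technical points are to justify measurability of $f^{-1}(a)$ so that $\pi_o(a \mid s)$ is well-defined as an integral, and to interchange $\partial/\partial\theta$ with the infinite sum implicit in the value functions. Both are standard given that $\mathcal{A}$ is finite (so $f^{-1}(a)$ is measurable whenever $f$ is), the rewards lie in $[-R_\text{max}, R_\text{max}]$, and $\gamma \in [0,1)$, which together ensure uniform convergence of the relevant series. An alternative route, should one prefer to work directly with gradients, is to start from the policy gradient theorem applied to $\pi_o$, use $\nabla_\theta \pi_o(a|s) = \int_{f^{-1}(a)} \nabla_\theta \pi_i(e|s)\,\mathrm{d}e$ to collapse the sum over $a$ and integral over $f^{-1}(a)$ into a single integral over $\mathcal{E}$, and then substitute $q^{\pi_o}(s,f(e)) = q^{\pi_i}(s,e)$ to recognize the result as the expression \eqref{Eqn:internal_gradient} for $\partial J_i/\partial \theta$.
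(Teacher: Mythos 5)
Your proposal is correct, and your primary route is genuinely different from the paper's. The paper proves the identity at the level of gradients: it expresses $J_o(\theta,f)$ via Lemma \ref{lemma:bellman}, applies the policy gradient theorem to $\pi_o$, substitutes $\frac{\partial}{\partial\theta}\pi_o(a|s) = \int_{f^{-1}(a)}\frac{\partial}{\partial\theta}\pi_i(e|s)\,\mathrm{d}e$, uses $q^{\pi_o}(s,a)=q^{\pi_i}(s,e)$ for $e\in f^{-1}(a)$, and collapses $\sum_{a}\int_{f^{-1}(a)}$ into a single integral over $\mathcal{E}$ to recover \eqref{Eqn:internal_gradient} --- i.e., exactly the ``alternative route'' you sketch at the end. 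Your main argument instead establishes the stronger pointwise identity $J_o(\theta,f)=J_i(\theta)$: since $f$ is deterministic and the dynamics and rewards depend only on $(S_t,A_t)$, sampling $E_t\sim\pi_i(\cdot|S_t)$ and setting $A_t=f(E_t)$ induces the same trajectory law as sampling $A_t\sim\pi_o(\cdot|S_t)$, hence $v^{\pi_i}=v^{\pi_o}$ and the objectives coincide, so their derivatives trivially agree. This is more elementary (no appeal to the policy gradient theorem) and yields a strictly stronger conclusion, from which it is also immediate that \emph{any} optimization of $J_i$ optimizes $J_o$. What the paper's route buys in exchange is the explicit expression \eqref{Eqn:internal_gradient} for the shared gradient, which is the update actually implemented in the PG-RA algorithm; your route proves the equivalence without producing that formula. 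Your attention to measurability of $f^{-1}(a)$ and to the uniform convergence justifying differentiation of the identity is appropriate and does not hide any gap.
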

    	The proof is deferred to the Appendix \ref{Appendix:lemma-2}.
        The chosen parameterization for the policy has this special property, which allows $\pi_i$ to be learned using its internal policy gradient.
       	Since this gradient update does not require computing the value of any $\pi_o(a|s)$ explicitly,  
       	the potentially intractable computation of $f^{-1}$ in \eqref{eqn:optimal-policy} required for $\pi_o$ can be avoided.
       	Instead, $\partial J_i(\theta) / \partial \theta$ can be used directly to update the parameters of the internal policy while still optimizing the overall policy's performance, $J_o(\theta,f)$.
           
        \subsection{Learning $\pi_i$ and $f$ Simultaneously} 
        \label{section:learn-simultaneously}
        Since the supervised learning procedure for $f$ does not require rewards, a few initial trajectories can contain enough information to begin learning a useful action representation. 
        As more data becomes available it can be used for fine-tuning and improving the action representations.
        %
	\subsubsection{Algorithm}
		We call our algorithm \textbf{p}olicy \textbf{g}radients with \textbf{r}epresentations for \textbf{a}ctions (PG-RA). 
		%
		%
		%
		PG-RA first initializes the parameters in the action representation component by sampling a few trajectories using a random policy and using the supervised loss defined in \eqref{Eqn:self-supervised-loss}.
		If additional information is known about the actions, as assumed in prior work \cite{dulac2015deep}, it can also be considered when initializing the action representations. 
		Optionally, once these action representations are initialized, they can be kept fixed.

		In the Algorithm \ref{Alg:1}, Lines $2$-$9$ illustrate the online update procedure for all of the parameters involved. 
		Each time step in the episode is represented by $t$.
		For each step, an action representation is sampled and is then mapped to an action by $\hat f$. 
		Having executed this action in the environment, the observed reward is then used to update the internal policy, $\pi_i$, using \textit{any} policy gradient algorithm.
		Depending on the policy gradient algorithm, if a critic is used then semi-gradients of the TD-error are used to update the parameters of the critic.
		In other cases, like in REINFORCE \cite{williams1992simple} where there is no critic, this step can be ignored.
		The observed transition is then used in Line $9$ to update the parameters of $\hat f$ and $\hat g$ so as to minimize the supervised learning loss \eqref{Eqn:self-supervised-loss}. 
		In our experiments, Line $9$ uses a stochastic gradient update. 

	\IncMargin{1em}
	\begin{algorithm2e}[t]
		Initialize action representations \\
		\For {$episode = 0,1,2...$}{
			\For {$t = 0,1,2...$} {
			    Sample action embedding, $E_t$, from $\pi_i(\cdot|S_t) $ \\
			    $A_t = \hat f(E_t)$\\
			    Execute $A_t$ and observe $S_{t+1}, R_{t}$ \\
			    Update $\pi_i$ using \textit{any} policy gradient algorithm\\ 
	            Update critic (if any) to minimize TD error\\
	            Update $\hat f$ and $\hat g$ to minimize $\mathcal L$ defined in \eqref{Eqn:self-supervised-loss} 

			}
		}
		\caption{Policy Gradient with Representations for Action (PG-RA)}
		\label{Alg:1}  
	\end{algorithm2e}
	\DecMargin{1em}    	
 
    \subsubsection{PG-RA Convergence}
     If the action representations are held fixed while learning the internal policy, then as a consequence of Property \ref{prop:local-policy-gradient}, convergence of our algorithm directly follows from previous two-timescale results \cite{borkar1997actor,bhatnagar2009natural}.
 	Here we show that learning both $\pi_i$ and $f$ simultaneously using our PG-RA algorithm can also be shown to converge by using a three-timescale analysis.

    Similar to prior work \cite{bhatnagar2009natural,degris2012off,konda2000actor}, for analysis of the updates to the parameters, $\theta \in \mathbb{R}^{d_\theta}$, of the internal policy, $\pi_i$, we use a projection operator $\Gamma : \mathbb{R}^{d_\theta} \rightarrow \mathbb{R}^{d_\theta}$ that projects any $x \in \mathbb{R}^{d_\theta}$ to a compact set $\mathcal{C}\subset \mathbb R^{d_\theta}$.
 	We then define an associated  vector field operator, $\hat \Gamma$,
    that projects any gradients leading outside the compact region,  $\mathcal{C}$, back to $\mathcal{C}$.
    We refer the reader to the Appendix \ref{Appendix:PG-RA-convergence} for precise definitions of these operators and the additional standard assumptions (\ref{ass:differentiable})--(\ref{ass:param-bounded}).
 	Practically, however, we do not project the iterates to a constraint region as they are seen to remain bounded (without projection).

   \begin{thm}
    	\label{thm:convergence}
    	  Under Assumptions \eqref{ass:A1}--\eqref{ass:param-bounded}, the internal policy parameters   $\theta_t$, converge to $\mathcal{\hat Z} = \left\{x \in \mathcal{C}|\hat \Gamma\left(\frac{\partial J_i(x)}{\partial \theta}\right)=0\right \}$ as $t \rightarrow \infty$, with probability one.
	    \end{thm}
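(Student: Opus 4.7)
The plan is to cast Algorithm~\ref{Alg:1} as a three-timescale stochastic approximation scheme and close the argument by the ODE method, extending the two-timescale actor-critic analysis of \citet{borkar1997actor}. I assign three step-size schedules: $\alpha_t$ for the action-representation parameters $(\hat f,\hat g)$, $\beta_t$ for the critic, and $\gamma_t$ for the internal-policy parameters $\theta$, satisfying the usual Robbins--Monro summability conditions together with the separation $\gamma_t/\beta_t\to 0$ and $\beta_t/\alpha_t\to 0$. Informally, from the viewpoint of the slowest iterate $\theta$ the faster iterates appear fully equilibrated, while from the viewpoint of the fastest iterate the slower two appear quasi-static. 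This decoupling is what permits each timescale to be analyzed through its own limiting ODE.

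At the fastest timescale, with $\theta$ held quasi-static and the on-policy sampling distribution frozen, the update in Line~9 is stochastic gradient descent on $\mathcal L(\hat f,\hat g)$ defined in \eqref{Eqn:self-supervised-loss}. Under the differentiability, bounded-noise, and bounded-iterate assumptions deferred to Appendix~\ref{Appendix:PG-RA-convergence}, standard stochastic-approximation results yield $(\hat f_t,\hat g_t)\to (\hat f^\star(\theta),\hat g^\star(\theta))$, a stationary point of $\mathcal L$ that depends continuously on $\theta$. At the middle timescale, with $(\hat f,\hat g)$ tracking its fast-time equilibrium and $\theta$ still quasi-static, Line~8 reduces to the standard TD update for the induced MDP whose ``actions'' are the embeddings $E_t\in\mathcal E$; convergence of the critic to $q^{\pi_i}$ (or a linear function-approximation analogue) then follows from the critic analysis of \citet{bhatnagar2009natural}.

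On the slowest timescale, the actor update can be written as $\theta_{t+1}=\Gamma\bigl(\theta_t+\gamma_t\bigl[\tfrac{\partial J_i(\theta_t)}{\partial\theta}+M_{t+1}+\varepsilon_t\bigr]\bigr)$, where $M_{t+1}$ is a martingale-difference sample of the policy-gradient estimator in \eqref{Eqn:internal_gradient} and $\varepsilon_t$ is a bias accounting for the faster variables being only approximately at equilibrium. By the boundedness assumptions $M_{t+1}$ has uniformly bounded conditional second moment, and by the two preceding steps $\varepsilon_t\to 0$ almost surely. Lemma~\ref{prop:local-policy-gradient} identifies the drift with $\partial J_o(\theta,\hat f^\star(\theta))/\partial\theta$, so the limiting projected ODE is $\dot\theta=\hat\Gamma\bigl(\partial J_i(\theta)/\partial\theta\bigr)$. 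The Kushner--Clark lemma, or equivalently Borkar's projected-ODE theorem, then yields almost sure convergence of $\theta_t$ to the stable invariant set of this ODE, which by construction is exactly $\hat{\mathcal Z}$.

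The main obstacle is the middle link: showing that the bias $\varepsilon_t$ really does vanish on the $\gamma_t$-clock. Concretely, one must verify that the equilibrium map $\theta\mapsto (\hat f^\star(\theta),\hat g^\star(\theta))$ is at least continuous, ideally locally Lipschitz, so that small $\gamma_t$-scale drifts in $\theta$ produce only small drifts in the fast-time target, and that the interleaved single-sample updates keep the faster iterates within $o(1)$ of that target on the slow clock. This is the usual hurdle in three-timescale analyses and is exactly what forces the step-size separation conditions above; the differentiability and boundedness hypotheses listed in the appendix are tailored precisely to close this gap, after which the three-timescale Borkar framework applies verbatim.
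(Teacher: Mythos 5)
Your proposal follows essentially the same route as the paper: a three-timescale stochastic approximation argument with the internal policy on the slowest clock, the drift of the slow iterate identified with $\partial J_i/\partial\theta$ via Lemma~\ref{prop:local-policy-gradient}, and almost sure convergence to $\hat{\mathcal Z}$ obtained from the projected-ODE/multi-timescale framework of \citet{borkar2009stochastic}. The only differences are minor: the paper places the critic on the \emph{fastest} timescale and the representation module on the intermediate one (the reverse of your ordering, which is equally valid here because the supervised update to $\hat f,\hat g$ does not depend on the critic), and it discharges the stationary-point requirement you flag as the main obstacle by assuming a (bi)linear parameterization of $\hat f$ and $\hat g$, for which the supervised loss is known to have no spurious local minima and a globally attainable minimizer.
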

%
  
%
%
	    \begin{proof} (Outline) We consider three learning rate sequences, such that the update recursion for the internal policy is on the slowest timescale, the critic's update recursion is on the fastest, and the action representation module's has an intermediate rate. 
	    With this construction, we leverage the three-timescale analysis technique \cite{borkar2009stochastic} and prove convergence.
	    The complete proof is in the Appendix \ref{Appendix:three-timescale}.
	    \end{proof}

	\section{Empirical Analysis}

  	\begin{figure*}[!ht]
    		\centering
    		\includegraphics[ height=3.2cm, width=4.25cm]{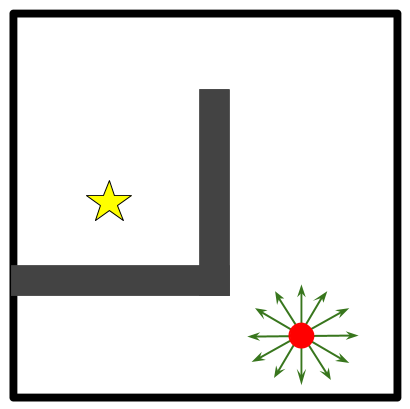}
    		\hspace{10pt}
    		\includegraphics[ height=3.5cm, width=4.5cm]{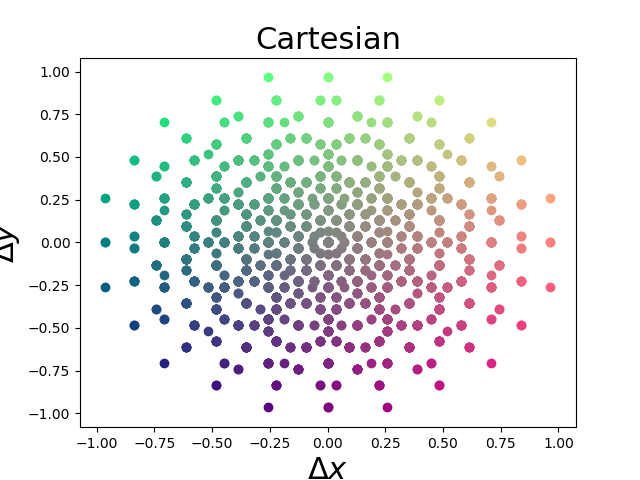}
    		\includegraphics[ height=3.5cm, width=4.5cm]{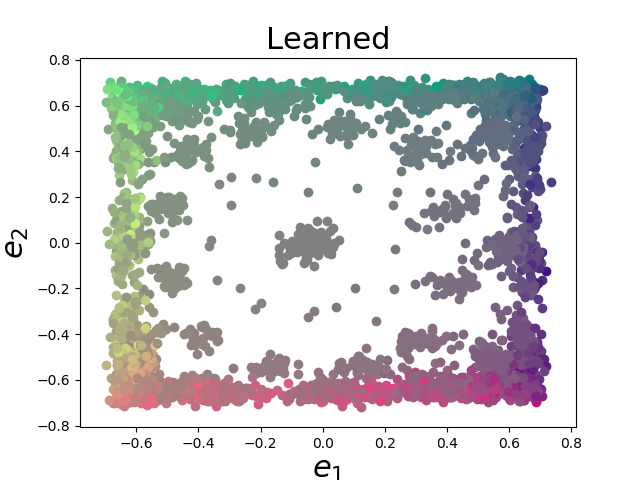}
    		\caption{ (a) The maze environment. 
    		The star denotes the goal state, the red dot corresponds to the agent and the arrows around it are the $12$  actuators.
    		Each action corresponds to a unique combination of these actuators.
    		Therefore, in total $2^{12}$ actions are possible.  
    		(b)  2-D representations for the displacements in the Cartesian co-ordinates caused  by each action, and (c) learned action embeddings.
    		In both (b) and (c), each action is colored based on the displacement ($\Delta x$, $\Delta y$) it produces. 
    		That is, with the color \lbrack R= $\Delta x$, G=$\Delta y$, B=$0.5$\rbrack, where $\Delta x$ and $\Delta y$ are normalized to $[0,1]$ before coloring. 
    		Cartesian actions are plotted on co-ordinates ($\Delta x$, $\Delta y$), and learned ones are on the coordinates in the embedding space.
    	    Smoother color transition of the learned representation is better as it corresponds to preservation of the \textit{relative} underlying structure.
    		The `squashing' of the learned embeddings is an artifact of a non-linearity applied to bound its range.
    		}
    		\label{Fig:emb}
    	\end{figure*}

    	A core motivation of this work is to provide an algorithm that can be used as a drop-in extension for improving the action generalization capabilities of existing policy gradient methods for problems with large action spaces.
    	We consider two standard policy gradient methods: actor-critic (AC) and deterministic-policy-gradient (DPG) \cite{silver2014deterministic} in our experiments.
    	Just like previous algorithms, we also ignore the $\gamma^t$ terms and perform the biased policy gradient update to be practically more sample efficient \cite{thomas2014bias}.
    	We believe that the reported results can be further improved by using the proposed method with other policy gradient methods; we leave this for future work.
    	For detailed discussion on parameterization of the function approximators and hyper-parameter search, see Appendix \ref{Appendix:parameter}.
    	\begin{figure*}[t]
		\centering
		\includegraphics[width=0.29\textwidth]{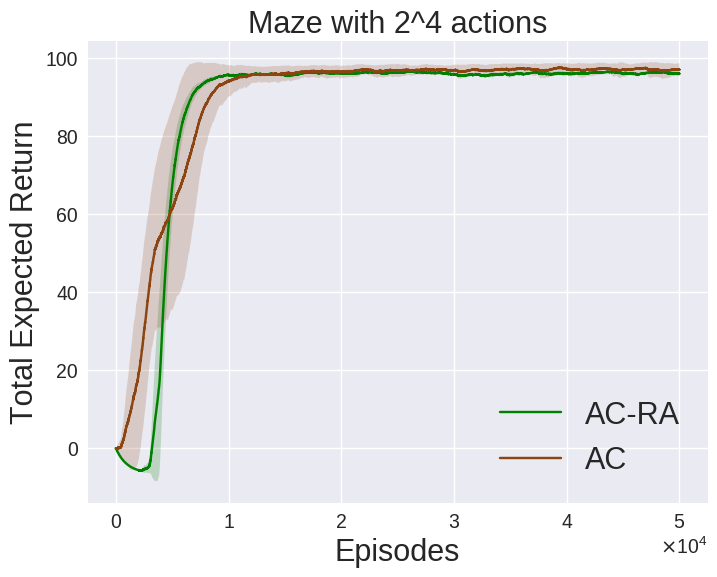} \hfill
		\includegraphics[width=0.29\textwidth]{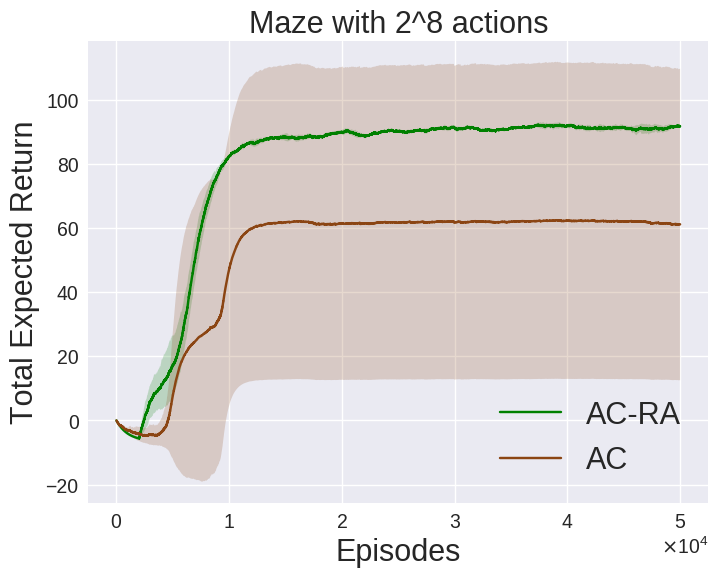} \hfill
		\includegraphics[width=0.29\textwidth]{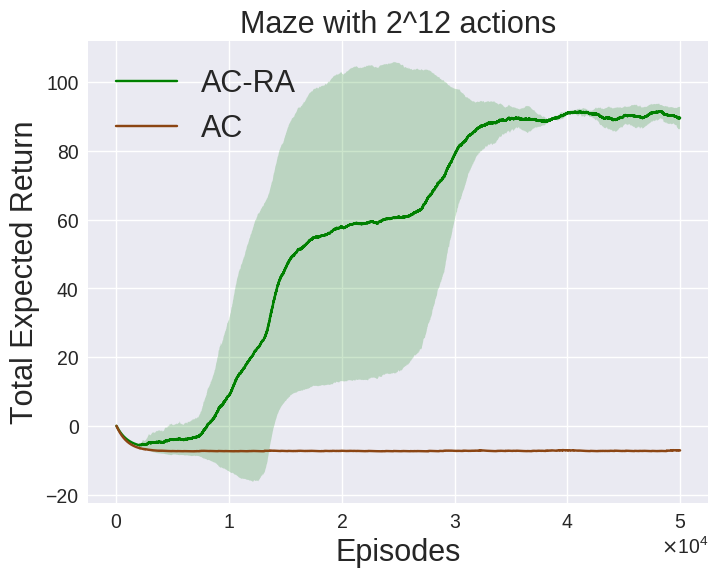} \\
		\includegraphics[width=0.29\textwidth]{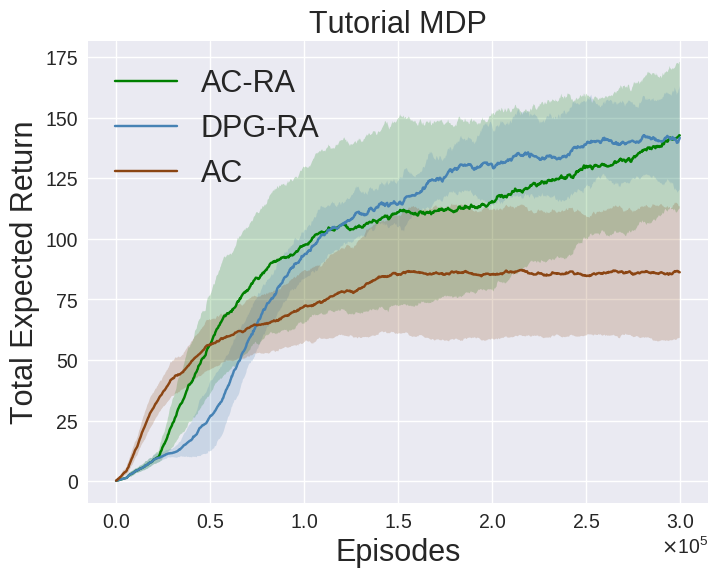}
		\hspace{20pt}
		\includegraphics[width=0.29\textwidth]{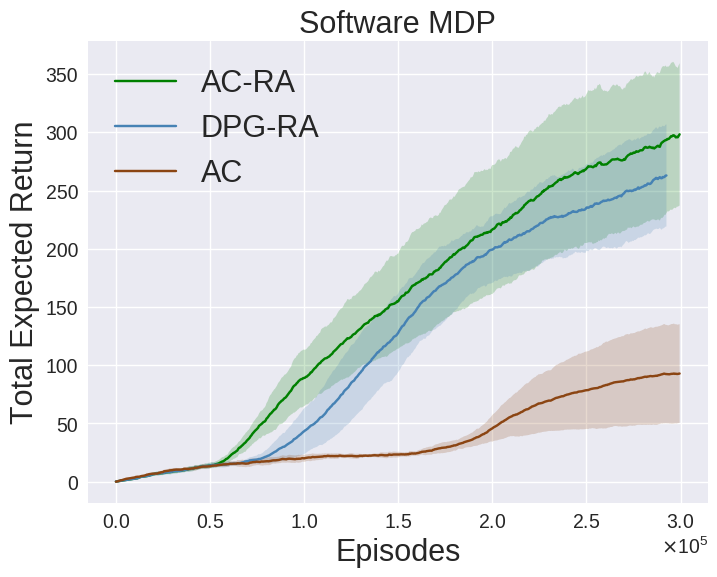}
		\caption{(Top) Results on the Maze domain with $2^4, 2^8,$ and $2^{12}$ actions respectively. (Bottom) Results on a) Tutorial MDP b) Software MDP. AC-RA and DPG-RA are the variants of PG-RA algorithm that uses actor-critic (AC) and DPG, respectively. The shaded regions correspond to one standard deviation and were obtained using $10$ trials.}
		\label{Fig:performance-plots}
	\end{figure*}
\subsection{Domains}
        \paragraph{Maze: } As a proof-of-concept, we constructed a continuous-state maze environment where the state comprised of the coordinates of the agent's current location. 
        The agent has $n$ equally spaced actuators (each actuator moves the agent in the direction the actuator is pointing towards) around it, and it can choose whether each actuator should be on or off. 
        Therefore, the size of the action set is exponential in the number of actuators, that is $|\mathcal{A}| = 2^n$. 
        The net outcome of an action is the vectorial summation of the displacements associated with the selected actuators.
        The agent is rewarded with a small penalty for each time step, and a reward of $100$ is given upon reaching the goal position. 
        To make the problem more challenging, random noise was added to the action $10\%$ of the time and the maximum episode length was $150$ steps. 
        %
    	%
    	
    	This environment is a useful test bed as it requires solving a long horizon task in an MDP with a large action set and a single goal reward.
    	Further, we know the Cartesian representation for each of the actions, and can thereby use it to visualize the learned representation, as shown in Figure \ref{Fig:emb}.
    	%
    	%

    	\paragraph{Real-word recommender systems: } 
    	We consider two real-world applications of recommender systems that require decision making over \textit{multiple time steps}.

        First, a web-based video-tutorial platform, which has a recommendation engine that suggests a series of tutorial videos on various software.
    	    The aim is to meaningfully engage the users in learning how to use these software and convert novice users into experts in their respective areas of interest.
	    The tutorial suggestion at each time step is made from a large pool of available tutorial videos on several software.

        The second application is a professional multi-media editing software. 
	    Modern multimedia editing software often contain many tools that can be used to manipulate the media, and this wealth of options can be overwhelming for users. 
	    In this domain, an agent suggests which of the available tools the user may want to use next.
	    The objective is to increase user productivity and assist in achieving their end goal. 

        For both of these applications, an existing log of user's click stream data was used to create an n-gram based MDP model for user behavior \cite{shani2005mdp}.
        In the tutorial recommendation task, user activity for a three month period was observed. 
        Sequences of user interaction were aggregated to obtain over $29$ million clicks.   
        Similarly, for a month long duration, sequential usage patterns of the tools in the multi-media editing software were collected to obtain a total of over $1.75$ billion user clicks.  
        Tutorials and tools that had less than $100$ clicks in total were discarded. 
        The remaining $1498$ tutorials and $1843$ tools for the web-based tutorial platform and the multi-media software, respectively, were used to create the action set for the MDP model.
        The MDP had continuous state-space, where each state consisted of the feature descriptors associated with each item (tutorial or tool) in the current n-gram. 
        Rewards were chosen based on a surrogate measure for difficulty level of tutorials and popularity of final outcomes of user interactions in the multi-media editing software, respectively. 
        Since such data is sparse, only $5\%$ of the items had rewards associated with them, and the maximum reward for any item was $100$.

        Often the problem of recommendation is formulated as a contextual bandit or collaborative filtering problem, but as shown by \citet{theocharous2015ad} these approaches fail to capture the long term value of the prediction.
        Solving this problem for a longer time horizon with a large number of  actions (tutorials/tools) makes this real-life problem a useful and a challenging domain for RL algorithms.
\subsection{Results}
\subsubsection*{Visualizing the learned action representations }
To understand the internal working of our proposed algorithm, we present visualizations of the learned action representations on the maze domain.
%
%
A pictorial illustration of the environment is provided in Figure \ref{Fig:emb}. 
Here, the underlying structure in the set of actions is related to the displacements in the Cartesian coordinates.
This provides an intuitive base case against which we can compare our results.

In Figure \ref{Fig:emb}, we provide a comparison between the action representations learned using our algorithm and the underlying Cartesian representation of the actions.
It can be seen that the proposed method extracts useful structure in the action space.
Actions which correspond to settings where the actuators on the opposite side of the agent are selected result in relatively small displacements to the agent.
These are the ones in the center of plot.
In contrast, maximum displacement in any direction is caused by only selecting actuators facing in that particular direction.
Actions corresponding to those are at the edge of the representation space.
The smooth color transition indicates that not only the information about magnitude of displacement but the direction of displacement is also represented.
Therefore, the learned representations efficiently preserve the relative transition information among all the actions. %
To make exploration step tractable in the internal policy, $\pi_i$, we bound the  representation space along each dimension to the range [$-1,1$] using \textit{Tanh} non-linearity.
This results in `squashing' of these representations around the edge of this range.

\subsubsection*{Performance Improvement }
 
 %

The plots in Figure \ref{Fig:performance-plots} for the Maze domain show how the performance of standard actor-critic (AC) method deteriorates as the number of actions increases, even though the goal remains the same. 	
However, with the addition of an action representation module it is able to capture the underlying structure in the action space and consistently perform well across all settings.
Similarly, for both the tutorial and the software MDPs, standard AC methods fail to reason over longer time horizons under such an overwhelming number of actions, choosing mostly one-step actions that have high returns.  
In comparison, instances of our proposed algorithm are not only able to achieve significantly higher return, up to $2\times$ and $3\times$ in the respective tasks, but they do so much quicker.
These results reinforce our claim that learning action representations allow implicit generalization of feedback to other actions embedded in proximity to executed action.

Further, under the PG-RA algorithm, only a fraction of total parameters, the ones in the internal policy, are learned using the high variance policy gradient updates.
    	The other set of parameters associated with action representations are learned by a supervised learning procedure.
    	This reduces the variance of updates significantly, thereby making the PG-RA algorithms learn a better policy faster.
        This is evident from the plots in the Figure  \ref{Fig:performance-plots}.
These advantages allow the internal policy, $\pi_i$, to quickly approximate an optimal policy without succumbing to the curse of large actions sets. 

	\section{Conclusion}

	    %
        In this paper, we built upon the core idea of leveraging the structure in the space of actions and showed its importance for enhancing generalization over large action sets in real-world large-scale applications. 
        Our approach has three key advantages. (a) Simplicity: by simply using the observed transitions, an additional supervised update rule can be used to learn action representations. (b) Theory: we showed that the proposed overall policy class can represent an optimal policy and derived the associated learning procedures for its parameters.  (c) Extensibility: as the PG-RA algorithm indicates, our approach can be easily extended using other policy gradient methods to leverage additional advantages, while preserving the convergence guarantees. 
        
        %

        %
        %

\section*{Acknowledgement}
The research was supported by and partially conducted at Adobe Research.
We thank our colleagues from the Autonomous Learning Lab (ALL) for the valuable discussions that greatly assisted the research.
We also thank Sridhar Mahadevan for his valuable comments and feedback on the earlier versions of the manuscript.
We are also immensely grateful to the four anonymous reviewers who shared their insights to improve the paper and make the results stronger.
\bibliography{example_paper}
\bibliographystyle{icml2019}

\newpage
\onecolumn
\appendix
\section*{Appendix}

\setcounter{lemma}{0}
\setcounter{thm}{1}

\section{Proof of Lemma 1}
\label{Appendix:lemma-1}
   	\begin{lemma}  
    	    Under Assumptions \eqref{ass:A1}--\eqref{ass:A2}, which defines a function $f$, for all $\pi$, there exists a $\pi_i$ such that
            \begin{align}
                v^\pi(s) = \sum_{a \in \mathcal{A}} \int_{f^{-1}(a)} \pi_i(e|s) q^\pi(s, a)\, \mathrm{d}e.
            \end{align}
        \end{lemma}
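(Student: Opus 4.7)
The plan is to reduce the claim to the standard Bellman identity $v^\pi(s)=\sum_a \pi(a|s)q^\pi(s,a)$ by constructing an internal policy $\pi_i$ whose pushforward through $f$ recovers $\pi$. Concretely, I will exhibit a $\pi_i$ satisfying $\pi(a|s)=\int_{f^{-1}(a)}\pi_i(e|s)\,\mathrm{d}e$ for every $a\in\mathcal A$ and every $s\in\mathcal S$, and then substitute into the Bellman expansion of $v^\pi$.

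First, I would invoke Assumption A2 to observe that $f:\mathcal E\to\mathcal A$ is a well-defined deterministic map, so the preimages $\{f^{-1}(a)\}_{a\in\mathcal A}$ form a partition of $\mathcal E$. Assumption A1 (together with A2) is what guarantees such an $f$ exists and that $P(A_t\mid S_t,S_{t+1})$ admits the integral decomposition through the embedding variable; in particular, every action taken with positive probability by some policy must lie in $f(\mathcal E)$, so $f^{-1}(a)$ is nonempty whenever $\pi(a|s)>0$. For each such $a$, fix an arbitrary probability density $\rho_a(\cdot\mid s)$ supported on $f^{-1}(a)$ (e.g.\ a point mass at any chosen representative, or a uniform density if $f^{-1}(a)$ has positive measure). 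Define
\begin{equation}
    \pi_i(e\mid s) \;\coloneqq\; \sum_{a\in\mathcal A}\pi(a\mid s)\,\rho_a(e\mid s)\,\mathbf{1}\!\left[e\in f^{-1}(a)\right].
\end{equation}
Because the preimages are disjoint and each $\rho_a$ integrates to one over $f^{-1}(a)$, a direct computation gives $\int_{f^{-1}(a)}\pi_i(e\mid s)\,\mathrm{d}e=\pi(a\mid s)$, and $\pi_i(\cdot\mid s)$ is itself a valid density on $\mathcal E$ since $\sum_a\pi(a\mid s)=1$.

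The conclusion then follows by plugging this identity into the Bellman equation:
\begin{align}
    v^\pi(s) \;=\; \sum_{a\in\mathcal A}\pi(a\mid s)\,q^\pi(s,a) \;=\; \sum_{a\in\mathcal A}\left(\int_{f^{-1}(a)}\pi_i(e\mid s)\,\mathrm{d}e\right)q^\pi(s,a),
\end{align}
and pulling the (constant in $e$) factor $q^\pi(s,a)$ inside the integral yields exactly \eqref{eqn:lemma-1}.

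The only subtle step I anticipate is the nonemptiness of $f^{-1}(a)$ for the actions that $\pi$ places mass on; this is where Assumption A1 is doing real work, since it forces actions to be measurable functions of embeddings and thereby ensures every action in the support of any realizable policy arises as $f(e)$ for some $e\in\mathcal E$. Everything else is bookkeeping about partitioning $\mathcal E$ and choosing arbitrary conditional densities on each piece, so the construction should go through cleanly.
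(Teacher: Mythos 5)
Your proof is correct, but it takes a genuinely different route from the paper's. You construct $\pi_i$ explicitly as a mixture $\pi_i(e|s)=\sum_{a}\pi(a|s)\rho_a(e|s)$ of arbitrary densities $\rho_a$ supported on the disjoint preimages $f^{-1}(a)$, check that its pushforward through $f$ recovers $\pi$, and substitute into the Bellman identity; Assumptions A1--A2 enter only to guarantee that $f$ exists and that $f^{-1}(a)$ is nonempty for actions in the support of $\pi$. The paper instead derives the identity from the generative model: starting from $v^\pi(s)=\sum_a\pi(a|s)q^\pi(s,a)$ it rewrites everything as joint probabilities over $(S_t,A_t,E_t,S_{t+1})$, introduces $e$ by total probability, and uses A1 ($A_t$ independent of $S_t$ given $E_t$), A2 ($P(a|e)$ an indicator of $e\in f^{-1}(a)$), plus an additional independence it invokes without stating as an assumption ($S_{t+1}$ independent of $E_t$ given $S_t,A_t$), arriving at $\pi_i(e|s)=P(e|s)$, the conditional density of the latent embedding. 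Your argument is more elementary and exposes the lemma as a pure reparameterization statement --- \emph{any} $\pi_i$ whose pushforward is $\pi$ works --- and it avoids the paper's divisions by $\pi(a|s)P(s)$, which implicitly require positivity. What the paper's longer derivation buys is a canonical $\pi_i$ tied to the latent-variable model that $\hat g$ later estimates, which matters for the narrative of Section 4.2. Two caveats on your side, both shared by the paper: if $f^{-1}(a)$ has Lebesgue measure zero, no density is supported on it and your point-mass fallback turns the displayed integral into a Dirac integral (the paper's footnote concedes the same looseness); and nonemptiness of $f^{-1}(a)$ for an \emph{arbitrary} $\pi$ strictly needs $f$ to be surjective onto all actions, which A1--A2 only deliver for actions taken with positive probability under the data-generating process.
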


        \begin{proof}
    
    	The Bellman equation associated with a policy, $\pi$, for any MDP, $\mathcal{M}$, is:
        	\begin{align}
                v^\pi(s) =& \sum_{a \in \mathcal{A}} \pi(a|s) q^\pi(s,a) \label{Eqn:Value-function}\\
                     =& \sum_{a \in \mathcal{A}} \pi(a|s) \sum_{s' \in \mathcal{S}}P(s'|s,a)[\mathcal{R}(s,a) + \gamma v^\pi(s')] \label{Eqn:Value-unrolled}.
            \end{align}
        %
        %
        $G$ is used to denote $[\mathcal{R}(s,a) + \gamma v^\pi(s')]$ hereafter.
        Re-arranging terms in the Bellman equation, 
            \begin{align}
                v^\pi(s) =& \sum_{a \in \mathcal{A}} \sum_{s' \in \mathcal{S}} \pi(a|s) \frac{P(s',s,a)}{P(s,a)}G \\
                     =& \sum_{a \in \mathcal{A}} \sum_{s' \in \mathcal{S}} \pi(a|s) \frac{P(s',s,a)}{\pi(a|s)P(s)}G \\
                    =& \sum_{a \in \mathcal{A}} \sum_{s' \in \mathcal{S}} \frac{P(s',s,a)}{P(s)}G \\        
                    =& \sum_{a \in \mathcal{A}} \sum_{s' \in \mathcal{S}} \frac{P(a|s,s')P(s,s')}{P(s)}G. \label{Eqn:before-approx}
            \end{align}
         Using the law of total probability, we introduce a new variable $e$ such that:\footnote{Note that $a$ and $e$ are from a joint distribution over a discrete and a continuous random variable. For simplicity, we avoid measure-theoretic notations to represent its joint probability. }
            \begin{align}
                v^\pi(s) =& \sum_{a \in \mathcal{A}} \sum_{s' \in \mathcal{S}}\int_e \!\! \frac{P(a,e|s,s')P(s,s')}{P(s)}G \,  \mathrm{d}e \label{Eqn:latent-space-introduced}.
            \end{align}
         After multiplying and dividing by $P(e|s)$, we have:
             \begin{align}
                    v^\pi(s) =& \sum_{a \in \mathcal{A}} \sum_{s' \in \mathcal{S}}\int_e  \!\!P(e|s) \frac{P(a,e|s,s')P(s,s')}{P(e|s)P(s)}G\, \mathrm{d}e\\
                    =& \sum_{a \in \mathcal{A}} \int_e \!\! P(e|s)  \sum_{s' \in \mathcal{S}}\frac{P(a,e|s,s')P(s,s')}{P(s,e)}G\,\mathrm{d}e \label{Eqn:embedding-space} \\
                    =& \sum_{a \in \mathcal{A}} \int_e \!\! P(e|s)  \sum_{s' \in \mathcal{S}}\frac{P(a,e,s,s')}{P(s,e)}G\,\mathrm{d}e \\
                   =& \sum_{a \in \mathcal{A}} \int_e \!\! P(e|s) \sum_{s' \in \mathcal{S}}P(s',  a|s, e)G\,\mathrm{d}e \\
                    =& \sum_{a \in \mathcal{A}} \int_e \!\! P(e|s) \sum_{s' \in \mathcal{S}}P(s'|s,  a,  e)P(a|s, e)G\,\mathrm{d}e \label{Eqn:embedding-transition}.
            \end{align}
        Since the transition to the next state, $S_{t+1}$, is conditionally independent of $E_t$, given the previous state, $S_t$, and the action taken, $A_t$,
            \begin{align}
                    v^\pi(s) =& \sum_{a \in \mathcal{A}} \int_e \!\! P(e|s) \sum_{s' \in \mathcal{S}}P(s'|s, a)P(a|s, e)G\,\mathrm{d}e \label{Eqn:env-independence}.
            \end{align}
        %
        Similarly, using the Markov property, action $A_t$ is conditionally independent of $S_t$   given $E_t$, 
            \begin{align}
                v^\pi(s) =& \sum_{a \in \mathcal{A}} \int_e \!\! P(e|s) \sum_{s' \in \mathcal{S}}P(s'|s, a)P(a|e)G\,\mathrm{d}e \label{Eqn:state-independence}.
            \end{align}
        As $P(a|e)$ evaluates to $1$ for representations, $e$, that map to $a$ and $0$ for others (Assumption \ref{ass:A2}),  
            \begin{align}
                        v^\pi(s)=& \sum_{a \in \mathcal{A}} \int_{f^{-1}(a)} P(e|s) \sum_{s' \in \mathcal{S}} P(s'|s, a) G\,\mathrm{d}e \label{Eqn:Indicator}\\
                        =& \sum_{a \in \mathcal{A}} \int_{f^{-1}(a)} P(e|s)  {\hspace{3pt}} q^\pi(s, a)\,\mathrm{d}e. \label{Eqn:last-step}
            \end{align}
         In \eqref{Eqn:last-step}, note that the probability density, $P(e|s)$ is the internal policy, $\pi_i(e|s)$. 
         Therefore,
         \begin{align}
              v^\pi(s)=& \sum_{a \in \mathcal{A}} \int_{f^{-1}(a)} \pi_i(e|s)  {\hspace{3pt}} q^\pi(s, a)\,\mathrm{d}e.
         \end{align}
        \end{proof}

\section{Proof of Lemma 2}
\label{Appendix:lemma-2}
      \begin{lemma}
          For all deterministic functions, $f$, which map each point, $e \in \mathbb{R}^{ d}$, in the representation space to an action, $a \in \mathcal{A}$, the expected updates to $\theta$ based on $\frac{\partial J_i(\theta)}{\partial \theta}$ are equivalent to updates based on $\frac{\partial J_o(\theta,f)}{\partial \theta}$. 
    	That is,
    	\begin{align*}
        	 \frac{\partial J_o(\theta,f)}{\partial \theta} = \frac{\partial J_i(\theta)}{\partial \theta}.
    	\end{align*}
    	\end{lemma}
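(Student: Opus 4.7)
The plan is to apply the policy gradient theorem to both $J_o$ and $J_i$ and reconcile the two expressions using the structural relationship $\pi_o(a|s) = \int_{f^{-1}(a)} \pi_i(e|s)\,\mathrm{d}e$ and the fact that $f$ partitions $\mathcal E$ into the preimages $\{f^{-1}(a) : a \in \mathcal A\}$.

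First I would establish three equalities that tie the $\pi_i$-world to the $\pi_o$-world. (i) The induced state distributions coincide: since the marginal distribution of $A_t$ under $\pi_i$ composed with $f$ equals $\pi_o(\cdot|s)$, the two policies generate the same trajectory distribution, so $d^{\pi_o} = d^{\pi_i}$ and $v^{\pi_o}(s) = v^{\pi_i}(s)$ for every $s$. (ii) For every $e \in f^{-1}(a)$, the embedding action-value coincides with the ordinary action-value: $q^{\pi_i}(s,e) = \mathbf{E}[\sum_k \gamma^k R_{t+k} \mid S_t=s, E_t=e, \pi_i, f] = q^{\pi_o}(s, f(e)) = q^{\pi_o}(s,a)$, since conditioning on $e$ fixes the action $f(e)=a$ and the rest of the trajectory has the same distribution under both formulations. (iii) Differentiating $\pi_o(a|s) = \int_{f^{-1}(a)} \pi_i(e|s)\,\mathrm{d}e$ in $\theta$ (with $f$ held fixed, so the domain of integration does not depend on $\theta$) gives $\partial \pi_o(a|s)/\partial \theta = \int_{f^{-1}(a)} \partial \pi_i(e|s)/\partial \theta\,\mathrm{d}e$.

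Next I would invoke the standard policy gradient theorem on $\pi_o$, writing
\begin{equation*}
\frac{\partial J_o(\theta,f)}{\partial \theta} = \sum_{t=0}^{\infty} \mathbf{E}\!\left[\gamma^t \sum_{a \in \mathcal A} q^{\pi_o}(S_t,a) \frac{\partial \pi_o(a|S_t)}{\partial \theta}\right],
\end{equation*}
where the expectation is over $S_t \sim d^{\pi_o}$. Substituting (iii) into the sum over $a$ and then using (ii) to rewrite $q^{\pi_o}(S_t,a) = q^{\pi_i}(S_t,e)$ for each $e \in f^{-1}(a)$ pulls the $q$ inside the integral:
\begin{equation*}
\frac{\partial J_o(\theta,f)}{\partial \theta} = \sum_{t=0}^{\infty} \mathbf{E}\!\left[\gamma^t \sum_{a \in \mathcal A} \int_{f^{-1}(a)} q^{\pi_i}(S_t,e) \frac{\partial \pi_i(e|S_t)}{\partial \theta}\,\mathrm{d}e\right].
\end{equation*}
Because $\{f^{-1}(a)\}_{a \in \mathcal A}$ is a disjoint partition of $\mathcal E$, the sum-of-integrals collapses into a single integral over $\mathcal E$. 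Combined with (i), which ensures the outer expectation over $S_t$ matches that in the expression \eqref{Eqn:internal_gradient} for $\partial J_i(\theta)/\partial \theta$, this gives exactly $\partial J_i(\theta)/\partial \theta$.

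The main obstacle I anticipate is step (ii): cleanly justifying $q^{\pi_i}(s,e) = q^{\pi_o}(s,f(e))$ requires care, because the two action-value functions are defined with respect to different ``action'' variables ($e$ versus $a$) and in principle different policies. The argument rests on $f$ being a deterministic function, so that conditioning on $E_t = e$ in the $\pi_i$-MDP is equivalent to conditioning on $A_t = f(e)$ in the $\pi_o$-MDP at time $t$, together with the fact that for $k \geq 1$ the distributions of $(S_{t+k}, A_{t+k})$ under $(\pi_i,f)$ and under $\pi_o$ are identical. A secondary technicality is the interchange of differentiation and integration in (iii), which is standard under mild regularity of $\pi_i$ in $\theta$; this is implicit in the paper's use of the policy gradient theorem and I would simply note that the same regularity suffices here.
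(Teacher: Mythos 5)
Your proposal is correct and follows essentially the same route as the paper's proof: apply the policy gradient theorem to $\pi_o$, push the derivative through $\pi_o(a|s) = \int_{f^{-1}(a)}\pi_i(e|s)\,\mathrm{d}e$, identify $q^{\pi_o}(s,a)$ with $q^{\pi_i}(s,e)$ for $e \in f^{-1}(a)$, and collapse the sum over $a$ of integrals over the preimages into a single integral over $\mathcal{E}$. Your items (i) and (ii) simply make explicit two identifications that the paper asserts in passing, and the paper's brief detour through the log-derivative form is cosmetic.
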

    	\begin{proof}
    	Recall from \eqref{eqn:optimal-policy} that the probability of an action given by the overall policy, $\pi_o$, is
    	\begin{align}
            \pi_o(a|s) &\coloneqq \int_{f^{-1}(a)}\pi_i(e|s)\,\mathrm{d}e.
        \end{align}
    	Using Lemma 1, we express the performance function of the overall policy, $\pi_o$, as:
    	\begin{align}
    	    J_o(\theta,f) =& \sum_{s \in \mathcal{S}} d_0(s) v^{\pi_o}(s)\\  
    	    =& \sum_{s \in \mathcal{S}} d_0(s) \sum_{a \in \mathcal{A}} \int_{f^{-1}(a)}\pi_i(e|s) q^{\pi_o}(s, a)\mathrm{d}e. 
    	\end{align}
        The gradient of the performance function is therefore
    	\begin{align}
                 \frac{\partial J_o(\theta,f)}{\partial \theta} =& \frac{\partial}{\partial \theta}\left[ \sum_{s \in \mathcal{S}} d_0(s) \sum_{a \in \mathcal{A}} \int_{f^{-1}(a)} \pi_i(e|s) q^{\pi_o}(s, a)\mathrm{d}e \right]. \label{Eqn:overall-policy-gradient}			
    	\end{align}
        Using the policy gradient theorem \cite{sutton2000policy} for the overall policy, $\pi_o$,  the partial derivative of $J_o(\theta,f)$ w.r.t.~ $\theta$ is,

    	\begin{align}
        	 \frac{\partial J_o(\theta,f)}{\partial \theta} =&  \sum_{t=0}^{\infty}\mathbf{E}\left[ \sum_{a \in \mathcal{A}} \gamma^t q^{\pi_o}(S_t, a) \frac{\partial}{\partial \theta} \left(\int_{f^{-1}(a)}\!\!\!\!\!\!\!  \pi_i(e|S_t) \mathrm{d}e\right) \right] \\
        	 =& \sum_{t=0}^{\infty}\mathbf{E}\left[  \sum_{a \in \mathcal{A}} \gamma^t \int_{f^{-1}(a)} \frac{\partial}{\partial \theta} \left(\pi_i(e|S_t)\right) q^{\pi_o}(S_t, a) \mathrm{d}e \right]  \label{Eqn:swapped-gradient} \\
        	 =& \sum_{t=0}^{\infty} \mathbf{E}\left[ \sum_{a \in \mathcal{A}} \gamma^t \int_{f^{-1}(a)} \!\!\!\!\!\!\!\!\!\!\pi_i(e|S_t)  \frac{\partial}{\partial \theta} \ln\left(\pi_i(e|S_t)\right) 
        	 q^{\pi_o}(S_t,a)\mathrm{d}e \right] \label{Eqn:score-fn}. 
    	\end{align}
    	Note that since $e$ deterministically maps to $a$, $q^{\pi_o}(S_t,a) = q^{\pi_i}(S_t,e)$. Therefore,
    	%
    	\begin{align}
    	 	 \frac{\partial J_o(\theta,f)}{\partial \theta} =& \sum_{t=0}^{\infty} \mathbf{E}\left[  \gamma^t \sum_{a \in \mathcal{A}}  \int_{f^{-1}(a)} \!\!\!\!\!\!\!\!\!\!\pi_i(e|S_t)  \frac{\partial}{\partial \theta} \ln\left(\pi_i(e|S_t)\right)  q^{\pi_i}(S_t,e)\mathrm{d}e\right]. \label{Eqn:no-a}  
    	\end{align}
    	Finally, since each $e$ is mapped to a unique action by the function $f$, the nested summation over $a$ and its inner integral over $f^{-1}(a)$ can be replaced by an integral over the entire domain of $e$. Hence,
    	\begin{align}
    	\label{eqn:gradient-equivalence}
    	   \frac{\partial J_o(\theta,f)}{\partial \theta} =& \sum_{t=0}^{\infty}\mathbf{E}\left[  \gamma^t \int_e \! \pi_i(e|S_t) \frac{\partial}{\partial \theta} \ln\left(\pi_i(e|S_t)\right)  q^{\pi_i}(S_t, e)\mathrm{d}e\right] \\
    	    =& \sum_{t=0}^{\infty}\mathbf{E}\left [  \gamma^t \int_e q^{\pi_i}(S_t, e) \frac{\partial}{\partial \theta} \pi_i(e|S_t)  \, \mathrm{d}e\right ] \\
        	=&  \frac{\partial J_i(\theta)}{\partial \theta}. 
    	\end{align}
    	\end{proof}

\section{Convergence of PG-RA}
\label{Appendix:three-timescale}
     	
        To analyze the convergence of PG-RA, we first briefly review existing two-timescale convergence results for actor-critics.
        Afterwards, we present a general setup for stochastic recursions of three dependent parameter sequences.
        Asymptotic behavior of the system is then discussed using three different timescales, by adapting existing multi-timescale results by \citet{borkar2009stochastic}.
        This lays the foundation for our subsequent convergence proof. 
        Finally, we prove convergence of the PG-RA method, which extends standard actor-critic algorithms using a new action prediction module, using a three-timescale approach.
        This technique for the proof is not a novel contribution of the work.
        We leverage and extend the existing convergence results of actor-critic algorithms \cite{borkar1997actor} for our algorithm. 

	\subsection{Actor-Critic Convergence Using Two-Timescales }
	In the actor-critic algorithms, the updates to the policy depends upon a critic that can estimate the value function associated with the policy at that particular instance. 
	One way to get a good value function is to fix the policy temporarily and update the critic in an inner-loop that uses the transitions drawn using only that fixed policy.
	While this is a sound approach, it requires a possibly large time between successive updates to the policy parameters and is severely sample-inefficient. 
	Two-timescale stochastic approximation methods \cite{bhatnagar2009natural,konda2000actor} circumvent this difficulty.  
	The faster update recursion for the critic ensures that asymptotically it is always a close approximation to the required value function before the next update to the policy is made.

	\subsection{Three-Timescale Setup}
	In our proposed algorithm, to update the action prediction module, one could have also considered an inner loop that uses transitions drawn using the fixed policy for supervised updates. 
	Instead, to make such a procedure converge faster, we extend the existing two-timescale actor-critic results and take a three-timescale approach. 
	%
	
    Consider the following system of stochastic ordinary differential equations (ODE): 
	\begin{align}
	    X_{t+1} =& X_{t} + \alpha_t^x (F_x(X_{t}, Y_{t}, Z_{t}) + \mathcal{N}_{t+1}^1), \label{Eqn:critic-recursion} \\
	    Y_{t+1} =& Y_{t} + \alpha_t^y (F_y(Y_{t}, Z_{t}) + \mathcal{N}_{t+1}^2), \label{Eqn:representation-recursion}\\
	    Z_{t+1} =& Z_{t} + \alpha_t^z (F_z(X_{t}, Y_{t}, Z_{t}) + \mathcal{N}_{t+1}^3),\label{Eqn:policy-recursion}
	\end{align}
	where, $F_x,F_y$ and $F_g$ are Lipschitz continuous functions and $\{\mathcal{N}_t^1\}, \{\mathcal{N}_t^2\}$, $\{\mathcal{N}_t^3\}$ are the associated martingale difference sequences for noise w.r.t. the increasing $\sigma$-fields $\mathcal{F}_t$ = $\sigma (X_n, Y_n, Z_n, \mathcal{N}_n^1, \mathcal{N}_n^2, \mathcal{N}_n^3, n \leq t), t \geq 0$, satisfying
	$$\mathbf{E}[||N^i_{t+1}||^2|\mathcal{F}_t] \leq D_1(1 + ||X_t||^2 + ||Y_t||^2 + ||Z_t||^2),$$ 
	for $i=1,2,3$, $t \geq 0$ and any constant $D < \infty$ such that the quadratic variation of noise is always bounded.
	To study the asymptotic behavior of the system, consider the following standard assumptions,
	\begin{assumpB}[Boundedness]
	\renewcommand\theass{A\arabic{ass}}
	    \label{ass:bounded}
	    $\underset{t}{\mathrm{sup}} \ (|| X_t|| + || Y_t|| + || Z_t||) < \infty$, almost surely.
	\end{assumpB}
	\begin{assumpB}[Learning rate schedule]
	    \label{ass:learning-rate}
	    The learning rates $\alpha_t^x, \alpha_t^y$ and $\alpha_t^z$ satisfy:
		\begin{align}
    	    \sum_t \alpha_t^x= \infty, \sum_t \alpha_t^y = \infty,  \sum_t \alpha_t^z = \infty, \\
    	    \sum_t (\alpha^x_t)^2< \infty, \sum_t (\alpha^y_t)^2< \infty, \sum_t (\alpha^z_t)^2 < \infty, \\
    	     \text{As} \hspace{5pt} t \rightarrow \infty, \hspace{10pt} \frac{\alpha^z_t}{\alpha^y_t} \rightarrow 0, \  \frac{\alpha^y_t}{\alpha^x_t} \rightarrow 0.   \label{Eqn:step-schedule} 
	    \end{align}    
	\end{assumpB}
	\begin{assumpB}[Existence of stationary point for Y]
	    \label{ass:eqbm-Y}
    	 The following ODE has a globally asymptotically stable equilibrium $\mu_1(Z)$, where $\mu_1(\cdot)$ is a Lipschitz continuous function.
        \begin{align}
            \dot Y =& F_y(Y(t), Z) \label{Eqn:fixed-rep-alternate}
        \end{align}
	\end{assumpB}
	\begin{assumpB}[Existence of stationary point for X]
	    \label{ass:eqbm-X}
	    The following ODE has a globally asymptotically stable equilibrium $\mu_2(Y, Z)$, where $\mu_2(\cdot, \cdot)$ is a Lipschitz continuous function.
	    \begin{align}
            \dot X =& F_x(X(t),Y, Z), \label{Eqn:fixed-critic}
        \end{align}
	\end{assumpB}
	\begin{assumpB}[Existence of stationary point for Z]
	    \label{ass:eqbm-Z}
	    The following ODE has a globally asymptotically stable equilibrium $Z^\star$,
        \begin{align}
        \dot Z =& F_z(\mu_2(\mu_1(Z(t)),Z(t)), \mu_1(Z(t)), Z(t)). \label{Eqn:fixed-critic-optimal}
        \end{align}
	\end{assumpB}
    
    Assumptions \ref{ass:bounded}--\ref{ass:learning-rate} are required to bound the values of the parameter sequence and make the learning rate well-conditioned, respectively. 
    Assumptions \ref{ass:eqbm-Y}-\ref{ass:eqbm-X} ensure that there exists a global stationary point for the respective recursions, individually, when other parameters are held constant.
    Finally, Assumption \ref{ass:eqbm-Z} ensures that there exists a global stationary point for the update recursion associated with $Z$, if between each successive update to $Z$, $X$ and $Y$ have converged to their respective stationary points.

    \begin{lemma}
        \label{prop:convergence}
        Under Assumptions \ref{ass:bounded}-\ref{ass:eqbm-Z}, $(X_t,Y_t,Z_t) \rightarrow (\mu_2(\mu_1(Z^\star), Z^\star), \mu_1(Z^\star), Z^\star)$ as $t \rightarrow \infty$, with probability one. 
    \end{lemma}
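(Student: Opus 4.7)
The plan is to apply the three-timescale stochastic approximation framework of Borkar, treating \eqref{Eqn:critic-recursion}--\eqref{Eqn:policy-recursion} as noisy Euler discretizations of their limiting ODEs. Because Assumption \ref{ass:learning-rate} gives $\alpha_t^z/\alpha_t^y \to 0$ and $\alpha_t^y/\alpha_t^x \to 0$, the recursion for $X$ runs on the fastest timescale, that for $Y$ on an intermediate timescale, and that for $Z$ on the slowest timescale. At each scale the slower variables are essentially frozen while the faster ones have already relaxed to their respective attractors, so the task reduces to identifying those attractors using Assumptions \ref{ass:eqbm-Y}--\ref{ass:eqbm-Z}. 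Boundedness (Assumption \ref{ass:bounded}) together with the quadratic bound on $\mathbf{E}[\|\mathcal{N}^i_{t+1}\|^2|\mathcal{F}_t]$ and the square-summability of the step sizes let us bound and ultimately discard the martingale-noise terms via Doob's inequality, so Lipschitz continuity of $F_x, F_y, F_z$ makes the ODE method applicable to each recursion.

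The argument proceeds in three stages. First, on the $X$-timescale the increments of $Y$ and $Z$ are $O(\alpha_t^y)$ and $O(\alpha_t^z)$ respectively, both $o(\alpha_t^x)$, so $(Y_t, Z_t)$ is quasi-static; the ODE method applied to \eqref{Eqn:critic-recursion} together with Assumption \ref{ass:eqbm-X} yields $\|X_t - \mu_2(Y_t, Z_t)\| \to 0$ almost surely. Second, on the $Y$-timescale $Z_t$ is quasi-static because $\alpha_t^z/\alpha_t^y \to 0$, and, crucially, $F_y$ does not depend on $X$, so no coupling with Stage~1 is needed; Assumption \ref{ass:eqbm-Y} then gives $\|Y_t - \mu_1(Z_t)\| \to 0$ almost surely. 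Third, on the $Z$-timescale we substitute these tracking results into \eqref{Eqn:policy-recursion} and write $F_z(X_t, Y_t, Z_t) = F_z(\mu_2(\mu_1(Z_t), Z_t), \mu_1(Z_t), Z_t) + \varepsilon_t$, where Lipschitz continuity of $F_z$ and $\mu_2$ forces $\varepsilon_t \to 0$ almost surely; the perturbed ODE method then reduces the $Z$-iteration to \eqref{Eqn:fixed-critic-optimal}, and Assumption \ref{ass:eqbm-Z} delivers $Z_t \to Z^\star$. Chaining the three limits back through $\mu_1$ and $\mu_2$ gives the claimed joint limit.

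The main obstacle is the rigorous justification of the quasi-static coupling across scales. One has to show that, over a macroscopic time window on the faster timescale, the accumulated drift of the slower variables is asymptotically negligible, and that the tracking errors from faster scales enter the slower ODE only as vanishing perturbations. This is precisely the content of Borkar's multi-timescale lemma (Chapter~6 of Borkar, 2009), and verifying its hypotheses amounts to checking the learning-rate-ratio conditions in \eqref{Eqn:step-schedule}, the noise bounds, and global asymptotic stability for each limiting ODE; the latter is exactly what Assumptions \ref{ass:eqbm-Y}--\ref{ass:eqbm-Z} supply. No other step requires new machinery, so the proof will largely consist of invoking this lemma once per timescale and carefully carrying the attractor $\mu_2(\mu_1(Z_t), Z_t)$ through the slowest recursion.
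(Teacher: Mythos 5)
Your proposal is correct and follows essentially the same route as the paper: both invoke Borkar's multi-timescale stochastic approximation framework, treat the three recursions as noisy Euler discretizations of their limiting ODEs, exploit the step-size ratios in \eqref{Eqn:step-schedule} to freeze slower variables on faster timescales, and use Assumptions \ref{ass:eqbm-Y}--\ref{ass:eqbm-Z} to identify the attractors $\mu_1$, $\mu_2$, and $Z^\star$ before chaining the tracking results through the slowest recursion. The only difference is presentational (you analyze the fastest $X$-recursion first while the paper starts with $Y$, which is immaterial since $F_y$ does not depend on $X$), so no substantive gap remains.
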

	\begin{proof}
	We adapt the multi-timescale analysis by \citet{borkar2009stochastic} to analyze the above system of equations using three-timescales.
	First we present an intuitive explanation and then we formalize the results.
	%

	%
    Since these three updates are not independent at each time step, we consider three step-size schedules: $\{\alpha_t^x \}$, $\{\alpha_t^y \}$ and $\{\alpha_t^z \}$, which satisfy Assumption \ref{ass:learning-rate}.
	As a consequence of \eqref{Eqn:step-schedule}, the recursion \eqref{Eqn:representation-recursion} is `faster' than \eqref{Eqn:policy-recursion}, and \eqref{Eqn:critic-recursion} is `faster' than both \eqref{Eqn:representation-recursion} and \eqref{Eqn:policy-recursion}.
	In other words, $Z$ moves on the slowest timescale and the $X$ moves on the fastest. 
	Such a timescale is desirable since $Z_t$ converges to its stationary point if at each time step the value of the corresponding converged $X$ and $Y$ estimates are used to make the next $Z$ update (Assumption \ref{ass:eqbm-Z}).

    To elaborate on the previous points, first consider the ODEs:
    \begin{align}
        \dot Y =& F_y(Y(t), Z(t)) \label{Eqn:fixed-rep-a}, \\
        \dot Z =& 0 \label{Eqn:fixed-policy-a}.
    \end{align}
    Alternatively, one can consider the ODE
    \begin{align}
        \dot Y =& F_y(Y(t), Z), 
    \end{align}
    in place of \eqref{Eqn:fixed-rep-a}, because $Z$ is fixed \eqref{Eqn:fixed-policy-a}. 
	%
	%
	Now, under Assumption \ref{ass:eqbm-Y} we know that the iterative update \eqref{Eqn:representation-recursion} performed on $Y$, with a fixed $Z$, will eventually converge to a corresponding stationary point.

    Now, with this converged $Y$, consider the following ODEs:
    \begin{align}
        \dot X =& F_x(X(t), Y(t), Z(t)), \label{Eqn:critic-recursion-b} \\
        \dot Y =& 0 \label{Eqn:fixed-rep-b}, \\
        \dot Z =& 0 \label{Eqn:fixed-policy-b}.
    \end{align}
    Alternatively, one can consider the ODE
    \begin{align}
        \dot X =& F_x(X(t),Y, Z), 
    \end{align}
    in place of \eqref{Eqn:critic-recursion-b}, as $Y$ and $Z$ are fixed \eqref{Eqn:fixed-rep-b}-\eqref{Eqn:fixed-policy-b}. 
    As a consequence of Assumption \ref{ass:eqbm-X}, $X$ converges when both $Y$ and $Z$ are held fixed.

     Intuitively, as a result of Assumption \ref{ass:learning-rate}, in the limit, the learning-rate, $\alpha_t^z$ becomes very small relative to $\alpha_t^y$.
     This makes $Z$ `quasi-static' compared to $Y$ and has an effect similar to fixing $Z_t$ and running the iteration \eqref{Eqn:representation-recursion} forever to converge at $\mu_1(Z_t)$.
     Similarly, both $\alpha_t^y$ and $\alpha_t^z$ become very small relative to $\alpha_t^x$.
     Therefore, both $Y$ and $Z$ are `quasi-static' compared to $X$, which has an effect similar to fixing $Y_t$ and $Z_t$, and running the iteration \eqref{Eqn:critic-recursion} forever.
     In turn, this makes $Z_t$ see $X_t$ as a close approximation to $\mu_2(\mu_1(Z(t)),Z(t))$ always, and thus $Z_t$ converges to $Z^\star$  due to Assumption \ref{ass:eqbm-Z}.  
     %
     
    %
    
    Formally, define three real-valued sequences $\{i_t\}, \{j_t\}$ and $\{k_t\}$ as $i_t=\sum_{n=0}^{t-1} \alpha_t^y, j_t=\sum_{n=0}^{t-1} \alpha_t^x$ and $k_t=\sum_{n=0}^{t-1} \alpha_t^z$, respectively.
    These are required for tracking the continuous time ODEs, in the limit, using discretized time.
    Note that $(i_t-i_{t-1}), (j_t-j_{t-1}), (k_t-k_{t-1})$ almost surely converge to $0$ as $t \rightarrow \infty$. 

    Define continuous time processes $\bar Y(i), \bar Z(i), i \geq 0$ as $\bar Y(i_t) = Y_t, \bar Z(i_t) = Z_t$, respectively with linear interpolations in between. 
    For $s \geq 0$, let $Y^s(i), Z^s(i), i \geq s$ denote the trajectories of \eqref{Eqn:fixed-rep-a}--\eqref{Eqn:fixed-policy-a} with $Y^s(s) = \bar Y(s)$ and $Z^s(s) = \bar Z(s)$. 
    Note that because of \eqref{Eqn:fixed-policy-a}, $\forall_i \geq s \ Z^s(i) = \bar Z(s)$.
    Now consider re-writing  \eqref{Eqn:representation-recursion}--\eqref{Eqn:policy-recursion} as,
    \begin{align}
	    Y_{t+1} =& Y_{t} + \alpha_t^y (F_y(Y_{t}, Z_{t}) + \mathcal{N}_{t+1}^2),\\
        Z_{t+1} =& Z_{t} + \alpha_t^y \left( \frac{\alpha_t^z}{\alpha_t^y} (F_z(X_{t}, Y_{t}, Z_{t}) + \mathcal{N}_{t+1}^3) \right).
    \end{align}
    When the time discretization corresponds to $\{i_t\}$, this shows that \eqref{Eqn:representation-recursion}--\eqref{Eqn:policy-recursion} can be seen as `noisy' Euler discretizations of the ODE \eqref{Eqn:fixed-rep-alternate} (or, equivalently of ODEs \eqref{Eqn:fixed-rep-a}--\eqref{Eqn:fixed-policy-a}), but as $\dot Z = 0$ this ODE has an approximation error of $\frac{\alpha_t^z}{\alpha_t^y} (F_z(X_{t}, Y_{t}, Z_{t}) + \mathcal{N}_{t+1}^3)$.
    However, asymptotically, this error vanishes as $\frac{\alpha^z_t}{\alpha^y_t} \rightarrow 0$. 
    Now using results by \citet{borkar2009stochastic}, it can be shown that, for any given $T \geq 0$, as $s \rightarrow \infty$,
    \begin{align}
    \underset{i \in [s,s+T]}{\mathrm{sup}} \ || \bar Y(i) - Y^s(i)||  \rightarrow 0, \\
    \underset{i \in [s,s+T]}{\mathrm{sup}} \ || \bar Z(i) - Z^s(i)||  \rightarrow 0, \label{Eqn:first-conevrgence}
    \end{align}
    with probability one.
    Hence, in the limit, the discretization error also vanishes and $(Y(t), Z(t)) \rightarrow (\mu_1(Z(t)), Z(t))$. Similarly for \eqref{Eqn:critic-recursion}--\eqref{Eqn:policy-recursion}, with $\{j_t\}$ as time discretization, and using the fact that both $\frac{\alpha^z_t}{\alpha^y_t} \rightarrow 0$, and $\frac{\alpha^y_t}{\alpha^x_t} \rightarrow 0$, a noisy Euler discretization can be obtained for ODE \eqref{Eqn:fixed-critic} (or equivalently for ODEs \eqref{Eqn:critic-recursion-b}--\eqref{Eqn:fixed-policy-b}). 
    Hence, in the limit, $(X(t), Y(t), Z(t)) \rightarrow (\mu_2(\mu_1(Z(t)),Z(t)), \mu_1(Z(t)), Z(t))$.  

    Now consider re-writing \eqref{Eqn:policy-recursion} as:
    \begin{align}
        Z_{t+1} =& Z_{t} \\
        & + \alpha_t^z (F_z(\mu_2(\mu_1(Z(t)),Z(t)), \mu_1(Z(t)), Z(t)))\\
        & - \alpha_t^z (F_z(\mu_2(\mu_1(Z(t)),Z(t)), \mu_1(Z(t)), Z(t))) \\
        & + \alpha_t^z (F_z(X_{t}, Y_{t}, Z_{t})) \\ 
        & + \alpha_t^z (\mathcal{N}_{t+1}^3) . \label{Eqn:noisy-eqbm-Z}
    \end{align}   
    This can be seen as a noisy Euler discretization of the ODE \eqref{Eqn:fixed-critic-optimal}, along the time-line $\{k_t\}$, with the error corresponding to the third, fourth and fifth terms on the RHS of \eqref{Eqn:noisy-eqbm-Z}.
    We denote these error terms as $I, II$ and $III$, respectively.
    In the limit, using the result that $(X(t), Y(t), Z(t)) \rightarrow (\mu_2(\mu_1(Z(t)),Z(t)), \mu_1(Z(t)), Z(t))$ as $t \rightarrow \infty$, the error $I + II$ vanishes.
    Similarly, martingale noise error, $III$, vanishes asymptotically as a consequence of bounded $Z$ values and $\sum_t (\alpha^z_t)^2 < \infty$. 
    Now using sequence of approximations using Gronwall's inequality, it can be shown that \eqref{Eqn:noisy-eqbm-Z} converges to $Z^*$ asymptotically \cite{borkar2009stochastic}.

    Therefore, under the Assumptions \ref{ass:bounded}-\ref{ass:eqbm-Z}, $(X_t,Y_t,Z_t) \rightarrow (\mu_2(\mu_1(Z^\star), Z^\star), \mu_1(Z^\star), Z^\star)$ as $t \rightarrow \infty$. 
    \end{proof}

	\subsection{PG-RA Convergence Using Three- Timescales: }
	\label{Appendix:PG-RA-convergence}

	%

    %
    %
	%

	Let the parameters of the critic and the internal policy be denoted as $\omega$ and $\theta$ respectively. Also, let  $\phi$ denote all the parameters of $\hat f$ and $\hat g$. 
    Similar to prior work \cite{bhatnagar2009natural,degris2012off,konda2000actor}, for analysis of the updates to the parameters, we consider the following standard assumptions required to ensure existence of gradients and bound the parameter ranges.
	%
	%
	\begin{assumpA}
	    \label{ass:differentiable}
	    For any state action-representation pair (s,e), internal policy, $\pi_i(e|s)$, is continuously differentiable in the parameter  $\theta$. 
	\end{assumpA}
	\begin{assumpA}
	\label{ass:projection}
	 The updates to the parameters, $\theta \in \mathbb{R}^{d_\theta}$, of the internal policy, $\pi_i$, includes a projection operator $\Gamma : \mathbb{R}^{d_\theta} \rightarrow \mathbb{R}^{d_\theta}$ that projects any $x \in \mathbb{R}^{d_\theta}$ to a compact set $\mathcal{C} = \{x|c_i(x) \leq 0, i=1,...,n\} \subset \mathbb{R}^{d_\theta}$, where $c_i(\cdot), i=1,...,n$ are real-valued, continuously differentiable functions on $\mathbb{R}^{d_\theta}$ that represents the constraints specifying the compact region. For each $x$ on the boundary of $\mathcal C$, the gradients of the active $c_i$ are considered to be linearly independent.  
	\end{assumpA}
	\begin{assumpA}
        \label{ass:param-bounded}
        The iterates $\omega_t$ and $\phi_t$ satisfy  $\underset{t}{\mathrm{sup}} \ (|| \omega_t||) < \infty$ and $\underset{t}{\mathrm{sup}} \ (|| \phi_t||) < \infty$.
	\end{assumpA}
	Let $v(\cdot)$ be the gradient vector field on $\mathcal{C}$. We define another vector field operator $\hat \Gamma$,
	\begin{align}
	    \hat \Gamma(v(\theta)) &\coloneqq \lim_{h\rightarrow0} \frac{\Gamma(\theta + hv(\theta)) - \theta}{h},
	\end{align}
    that projects any gradients leading outside the compact region,  $\mathcal{C}$, back to $\mathcal{C}$.
    \begin{thm}
    	  Under Assumptions (\ref{ass:A1})-(\ref{ass:param-bounded}), the internal policy parameters   $\theta_t$ converge to $\mathcal{\hat Z} = \left\{x \in \mathcal{C}|\hat \Gamma\left(\frac{\partial J_i(x)}{\partial \theta}\right)=0\right\}$ as $t \rightarrow \infty$, with probability one.
	    \end{thm}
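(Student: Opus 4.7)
The plan is to cast the three coupled update rules of Algorithm~\ref{Alg:1} into the three-timescale stochastic approximation framework established in Lemma~\ref{prop:convergence}, with the critic parameters $\omega$ in the role of the fastest variable $X$, the action-representation parameters $\phi=(\hat f,\hat g)$ in the role of the intermediate variable $Y$, and the internal-policy parameters $\theta$ in the role of the slowest variable $Z$. Concretely, I would rewrite each line of Algorithm~\ref{Alg:1} as a stochastic recursion $\omega_{t+1}=\omega_t+\alpha_t^\omega(F_\omega+\mathcal N^1_{t+1})$, and analogously for $\phi_t$ and $\theta_t$, identify the drift functions $F_\omega,F_\phi,F_\theta$ with the appropriate expected increments, and verify that the residuals form martingale differences with bounded quadratic variation relative to the filtration generated by the past samples and iterates.

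With this identification, the theorem reduces to invoking Lemma~\ref{prop:convergence} once its structural hypotheses (\ref{ass:bounded})--(\ref{ass:eqbm-Z}) are checked. Boundedness (\ref{ass:bounded}) is immediate: the projection $\Gamma$ keeps $\theta_t$ in $\mathcal C$, and Assumption~\ref{ass:param-bounded} supplies uniform bounds on $\omega_t$ and $\phi_t$. The step-size requirement (\ref{ass:learning-rate}) holds by choosing schedules with $\alpha^\theta_t/\alpha^\phi_t\to 0$ and $\alpha^\phi_t/\alpha^\omega_t\to 0$ satisfying the standard square-summable-but-not-summable conditions. For (\ref{ass:eqbm-Y}), with $\theta$ held fixed the recursion on $\phi$ is a stochastic gradient step on the continuously differentiable supervised loss $\mathcal L(\hat f,\hat g)$ of \eqref{Eqn:self-supervised-loss}; Assumption~\ref{ass:differentiable} together with smoothness of the chosen parameterization yields an asymptotically stable attractor $\mu_1(\theta)$, Lipschitz in $\theta$. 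For (\ref{ass:eqbm-X}), with $(\phi,\theta)$ held fixed the critic recursion is a standard TD-style update whose globally stable point $\mu_2(\phi,\theta)$ is provided by classical actor-critic analyses in the cited two-timescale literature.

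The essential step is verifying (\ref{ass:eqbm-Z}), and Lemma~\ref{prop:local-policy-gradient} does the heavy lifting here. Once $\phi_t$ has equilibrated at $\mu_1(\theta)$ and $\omega_t$ at $\mu_2(\mu_1(\theta),\theta)$, the expected drift of the $\theta$ recursion collapses, modulo a critic bias that vanishes on the fast timescale, to the internal-policy gradient $\partial J_i(\theta)/\partial\theta$, which by Lemma~\ref{prop:local-policy-gradient} coincides with $\partial J_o(\theta,\hat f)/\partial\theta$ at the converged $\hat f$. The associated projected ODE $\dot\theta=\hat\Gamma(\partial J_i(\theta)/\partial\theta)$ therefore has $\hat{\mathcal Z}$ as its equilibrium set, and $J_i$ serves as a Lyapunov function inside $\mathcal C$ via the standard inner-product argument for projected gradient flows. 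Applying Lemma~\ref{prop:convergence} then yields $\theta_t\to\hat{\mathcal Z}$ with probability one.

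The main obstacle I anticipate is bridging the gap in (\ref{ass:eqbm-Z}) between the ideal embedding-to-action map $f$ assumed in (\ref{ass:A1})--(\ref{ass:A2}) and the learned estimator $\hat f$ produced by the supervised module at the intermediate-timescale equilibrium. One must argue that the limiting $\hat f$ is indeed a deterministic map $\mathcal E\to\mathcal A$ so that the hypothesis of Lemma~\ref{prop:local-policy-gradient} applies in the limit, and that the continuous dependence of the $\theta$-drift on $\phi$ is strong enough that the three-timescale quasi-static intuition becomes a rigorous Gronwall-type tracking bound for the noisy Euler discretizations. Handling this interface carefully, together with the standard martingale noise control on each timescale, is the technical crux; once it is in place, the conclusion follows directly from Lemma~\ref{prop:convergence}.
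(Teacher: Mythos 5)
Your proposal follows essentially the same route as the paper's proof: identify $(\omega,\phi,\theta)$ with the fast/intermediate/slow variables $(X,Y,Z)$ of the three-timescale recursion, verify Assumptions \ref{ass:bounded}--\ref{ass:eqbm-Z} (boundedness via the projection $\Gamma$ and Assumption \ref{ass:param-bounded}, the learning-rate schedule by choice, classical TD convergence for the critic, and Lemma \ref{prop:local-policy-gradient} to reduce the slowest-timescale drift to the internal policy gradient and hence the overall policy gradient), and then invoke Lemma \ref{prop:convergence}. The one place you are looser than the paper is the verification of Assumption \ref{ass:eqbm-Y}: smoothness of the supervised loss alone does not give a globally asymptotically stable equilibrium $\mu_1(\theta)$, and the paper instead leans on the specific (bi)linear parameterization of $\hat f$ and $\hat g$ together with known no-spurious-local-minima results for such models.
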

	\begin{proof} 
	PG-RA algorithm considers the following stochastic update recursions for the critic, action representation modules, and the internal policy, respectively:
	\begin{align}
	    \omega_{t+1} =& \omega_{t} + \alpha_t^{\omega} \delta_t \frac{\partial v(s)}{\partial \omega}  \label{Eqn:PGRA-critic-recursion} \\
	    \phi_{t+1} =& \phi_{t} + \alpha_t^{\phi} \frac{- \partial \log \hat P(a|s,s')}{\partial \phi} \label{Eqn:PGRA-representation-recursion}\\
	    \theta_{t+1} =& \theta_{t} + \alpha_t^{\theta} \hat\Gamma\left(\delta_t\frac{\partial \log \pi_i(e|s)}{\partial \theta}\right),\label{Eqn:PGRA-policy-recursion}
	\end{align}
	where, $\delta_t$ is the TD-error and is given by:
	\begin{align}
	    \delta_t =& r + \gamma v(s') - v(s).
	\end{align}
	We now establish how these updates can be mapped to the three ODEs \eqref{Eqn:critic-recursion}--\eqref{Eqn:policy-recursion} satisfying Assumptions (\ref{ass:bounded})--(\ref{ass:eqbm-Z}), so as to leverage the result from Lemma \ref{prop:convergence}. 
	To do so, we must consider how the recursions are dependent on each other.
	Since the reward observed is a consequence of the action executed using the internal policy and the action representation module, it makes $\delta$ dependent on both $\phi$ and $\theta$.
	Due to the use of bootstrapping and a baseline, $\delta$ is also dependent on $\omega$.
	As a result, the updates to both $\omega$ and $\theta$ are dependent on all three sets of parameters.
	In contrast, notice that the updates to the action representation module is independent of the rewards/critic and is thus dependent only on $\theta$ and $\phi$.
	Therefore, the ODEs that govern the update recursions for PG-RA parameters are of the form \eqref{Eqn:critic-recursion}--\eqref{Eqn:policy-recursion}, where $(\omega, \phi, \theta)$ correspond directly to $(X,Y,Z)$.
	The functions $F_x,F_y$ and $F_z$ in  \eqref{Eqn:critic-recursion}--\eqref{Eqn:policy-recursion} correspond to the semi-gradients of TD-error, gradients of self-supervised loss, and policy gradients, respectively. 
    %
    %
	Lemma \ref{prop:convergence} can now be leveraged if the associated assumptions are also satisfied by our PG-RA algorithm.

	\textbf{For requirement \ref{ass:bounded}}, as a result of the projection operator $\Gamma$, the internal policy parameters, $\theta$, remain bounded. Further, by assumption, $\omega$ and $\phi$ always remain bounded as well. Therefore, we have that $\underset{t}{\mathrm{sup}} \ (|| \omega_t|| + || \phi_t|| + || \theta_t||) < \infty$.

		\textbf{For requirement \ref{ass:learning-rate}}, the learning rates $\alpha_t^\omega, \alpha_t^\phi$ and $\alpha_t^\theta$  are hyper-parameters and can be set such that as $t \rightarrow \infty$,
		$$ \frac{\alpha_t^\theta}{\alpha_t^\phi} \rightarrow 0, \frac{\alpha_t^\phi}{\alpha_t^\omega} \rightarrow 0,$$
		to meet the three-timescale requirement in Assumption \ref{ass:learning-rate}. 

	\textbf{For requirement \ref{ass:eqbm-Y}}, recall that when the internal policy has fixed parameters, $\theta$, the updates to the action representation component follows a supervised learning procedure. 
	For linear parameterization of estimators $\hat f$ and $\hat g$, the action prediction module is equivalent to a bi-linear neural network.
	%
	%
	Multiple works have established that for such models, there are no spurious local minimas and the Hessian at every saddle point has at least one negative eigenvalue  \cite{kawaguchi2016deep,haeffele2017global,zhu2018global}. 
	Further, the global minima can be achieved by stochastic gradient descent.
	This ensures convergence to the required critical point and satisfies Assumption \ref{ass:eqbm-Y}.
	%
		
		%
		%
	%
	%
	
	\textbf{For requirement \ref{ass:eqbm-X}}, given a fixed policy (fixed action representations and fixed internal policy) the proof of convergence of a linear critic to the stationary point $\mu_2(\phi, \theta)$ using TD($\lambda$) is a well established result \cite{tsitsiklis1996analysis}. %
	We use  $\lambda = 0$ in our algorithm, the proof however carries through for $\lambda > 0$ as well.
	This satisfies Assumption \ref{ass:eqbm-X}.

	\textbf{For requirement \ref{ass:eqbm-Z}}, the condition can be relaxed to a local rather than global asymptotically stable fixed point, because we only need convergence.
 Under the optimal critic and action representations for every step, the internal policy follows its internal policy gradient. 
 Using Lemma \ref{prop:local-policy-gradient}, we established that this is equivalent to following the policy gradient of the overall policy and thus the internal policy converges to its local fixed point as well.

	This completes the necessary requirements, the remaining proof now follows from Lemma \ref{prop:convergence}.
	\end{proof}

	\section{Implementation Details } 
	\label{Appendix:parameter}
		\subsection{Parameterization}
	     In our experiments, we consider a parameterization that minimizes the computational complexity of the algorithm.
	     Learning the parameters of the action representation module, as in  \eqref{Eqn:self-supervised-loss}, requires computing the value $\hat P(a|s,s')$ in \eqref{eqn:action-rep-estimator}.
	     This involves a complete integral over $e$. 
        Due to the absence of any closed form solution, we need to rely on a stochastic estimate. 
        Depending on the dimensions of $e$, an extensive sample based evaluation of this expectation can be computationally expensive. 
        To make this more tractable, we approximate \eqref{eqn:action-rep-estimator} by mean-marginalizing it using the estimate of the mean from $\hat g$. 
    	That is, we approximate \eqref{eqn:action-rep-estimator}
        as $\hat f (a|\hat g(s,s'))$.
        We then parameterize $\hat f (a|\hat g(s,s'))$ as,
        \begin{align}
            && \hat f(a|\hat g(s,s')) =& \frac{\mathbf{e}^{z_a/\tau}}{\sum_{a'}\mathbf{e}^{z_{a'}/\tau}} \label{Eqn:Estimated-dist}, &\\
            \text{where,} &&
            z_a =& W^\top_a \hat g(s,s') \label{Eqn:dot}.
        \end{align}
        This estimator, $\hat f$, models the probability  of any action, $a$, based on its similarity with a given representation $e$. 
        In \eqref{Eqn:dot}, $W \in \mathbb{R}^{d_e \times |\mathcal{A}|}$ is a matrix where each column represents a learnable action representation of dimension $\mathbb{R}^{d_e}$.
        $W^\top_a$ is the transpose of the vector corresponding to the representation of the action $a$, and $z_a$ is its measure of similarity with the embedding from $\hat g(s,s')$.  
    	%
        To get valid probability values, a Boltzmann distribution is used with $\tau$ as a temperature variable.
        In the limit when $\tau \rightarrow 0$ the conditional distribution over actions becomes the required deterministic estimate for $\hat f$. 
        That is, the entire probability mass would be on the action, $a$, which has the most similar representation to $e$.
        To ensure empirical stability during training, we relax $\tau$ to $1$.
        During execution, the action, $a$, which has the most similar representation to $e$, is chosen for execution.
        In practice, the linear decomposition in \eqref{Eqn:dot} is not restrictive as $\hat g$ can still be any differentiable function approximator, like a neural network.

	\subsection{Hyper-parameters}
	For the maze domain, single layer neural networks were used to parameterize both the actor and critic, and the learning rates were searched over $\{1e-2, 1e-3, 1e-4, 1e-5\}$. State features were represented using the $3^\text{rd}$ order coupled Fourier basis \cite{konidaris2011value}. The discounting parameter $\gamma$ was set to $0.99$ and $\lambda$ to $0.9$. Since it was a toy domain, the dimensions of action representations were fixed to $2$. $2000$ randomly drawn trajectories were used to learn an intial representation for the actions. Action representations were only trained once in the beginning and kept fixed from there on.
	
	For the real-world environments, $2$ layer neural networks were used to parameterize both the actor and critic, and the learning rates were searched over $\{1e-2, 1e-3, 1e-4, 1e-5\}$. Similar to prior work, the module for encoding state features was shared to reduce the number of parameters, and the learning rate for it was additionally searched over $\{1e-2, 1e-3, 1e-4, 1e-5\}$. The dimension of the neural network's hidden layer was searched over $\{64, 128, 256\}$.  The discounting parameter $\gamma$ was set to $0.9$. For actor-critic based results $\lambda$ was set to $0.9$ and for DPG the target actor and policy update rate was fixed to its default setting of $0.001$. The dimension of action representations were searched over $\{16, 32, 64\}$. Initial $10,\!000$ randomly drawn trajectories were used to learn an initial representation for the actions. The action prediction component was continuously improved on the fly, as given in PG-RA algorithm. 
	
	For all the results of the PG-RA based algorithms, since $\pi_i$ was defined over a continuous space, it was parameterized as the isotropic normal distribution. The value for variance was searched over $\{0.1, 0.25, 1, -1\}$, where $-1$ represents learned variance. Function $\hat g$ was parameterized to concatenate the state features of both $s$ and $s'$ and project to the embedding space using a single layer neural network with \textit{Tanh} non-linearity. To keep the effective range of action representations bounded, they were also transformed by \textit{Tanh} non-linearity before computing the similarity scores. Though the similarity metric is naturally based on the dot product, other distance metrics are also valid. We found squared Euclidean distance to work best in our experiments. The learning rates for functions $\hat f$ and $\hat g$ were jointly searched over $\{1e-2, 1e-3, 1e-4\}$. All the results were obtained for $10$ different seeds to get the variance. 
	As our proposed method decomposes the overall policy into two components, the resulting architecture resembles that of a one layer deeper neural network.
	Therefore, for the baselines, we ran the experiments with a hyper-parameter search for policies with additional depths $\{1, 2, 3\}$, each with different combinations of width $\{2, 16, 64\}$.
	The remaining architectural aspects and properties of the hyper-parameter search for the baselines were performed in the same way as mentioned above for our proposed method.
	All the results presented in Figure \ref{Fig:performance-plots} corresponds to the hyper-parameter setting that performed the best.

\end{document}